\documentclass[11pt]{article}
\usepackage[utf8]{inputenc}

\usepackage{bm}
\usepackage{amsmath}
\usepackage{amsfonts}
\usepackage{amssymb}
\usepackage{amsthm}
\usepackage{mathtools}
\usepackage{algorithm,algorithmic}
\usepackage{breqn}

\newtheorem{theorem}{Theorem}
\newtheorem{lemma}{Lemma}

\usepackage{threeparttable}
\usepackage{multirow}
\usepackage{fullpage}
\usepackage{adjustbox}
\usepackage{graphicx}
\usepackage{pgfplots}
\pgfplotsset{compat=newest}
\usepackage{subfig}

\DeclareMathOperator{\tr}{tr}

\DeclarePairedDelimiter\ceil{\lceil}{\rceil}

\newcommand{\vertiii}[1]{{\left\vert\kern-0.25ex\left\vert\kern-0.25ex\left\vert #1\right\vert\kern-0.25ex\right\vert\kern-0.25ex\right\vert}}

\definecolor{DarkGreen}{rgb}{0.1,0.5,0.1}
\definecolor{DarkRed}{rgb}{0.5,0.1,0.1}
\definecolor{DarkBlue}{rgb}{0.1,0.1,0.5}

\usepackage[colorlinks,bookmarks=true]{hyperref}
\hypersetup{
    citecolor=DarkBlue,  
    urlcolor=DarkGreen,   
    linkcolor=DarkRed
}

\usepackage[capitalize,nameinlink,noabbrev]{cleveref}
\crefname{section}{\S}{\S\S}
\Crefname{section}{\S}{\S\S}

\title{Scalable Algorithms for Learning High-Dimensional Linear Mixed Models}

\author{
Zilong Tan\\
Department of Computer Science\\
Duke University \\
{\tt ztan@cs.duke.edu}
\and
Kimberly Roche\\
Program in Computational Biology \& Bioinformatics \\
Duke University \\
{\tt kimberly.roche@duke.edu}
\and
Xiang Zhou\\
Department of Biostatistics \&\\
Center for Statistical Genetics\\
University of Michigan\\
{\tt xzhousph@umich.edu}
\and
Sayan Mukherjee\\
Departments of Statistical Science,\\
Mathematics, Computer Science, \\
Biostatistics \& Bioinformatics \\
Duke University \\
{\tt sayan@stat.duke.edu}}

\begin{document}

\maketitle

\begin{abstract}
Linear mixed models (LMMs) are used extensively to model 
dependecies of observations in linear regression and are used extensively in many application areas. Parameter estimation for LMMs can be computationally prohibitive on big data. State-of-the-art learning algorithms require computational complexity which depends at least linearly on the dimension $p$ of the covariates, and often use heuristics that do not offer theoretical guarantees. We present scalable algorithms for learning high-dimensional LMMs with sublinear computational complexity dependence on $p$. Key to our approach are novel dual estimators which use only kernel functions of the data, and fast computational techniques based on the subsampled randomized Hadamard transform. We provide theoretical guarantees for our learning algorithms, demonstrating the robustness of parameter estimation. Finally, we complement the theory with experiments on large synthetic and real data.
\end{abstract}

\section{Introduction} 
Linear mixed models (LMMs) are widely used in many real world applications ranging from longitudinal data analysis \cite{Laird82,Demidenko13} and genome wide association studies \cite{Kang08,Lippert11,Zhou14,Zhou17} to recommender systems \cite{Zhang08,Zhang16}. LMMs provide a flexible framework for modeling a wide range of data types, including clustered, longitudinal, and spatial data. Parameter estimation for LMMs is computationally prohibitive for big data, both for large sample size $n$ \cite{Zhou14,Darnell17,Perry17} and for high-dimensional covariates $p$ \cite{Schelldorfer11}. The main computational bottlenecks for parameter estimation arise from the non-convexity of the optimization problem \cite{Kang08,Perry17} as well as the computational cost of matrix inversions \cite{Laird87,Lindstrom88,Bates15,Zhou17}. State-of-the-art methods for parameter estimation in LMMs require computational complexity that depends at least linearly on $p$: (i) $O\left(n k p\right)$ for the setting $n > p$ with a rank $k$ covariance matrix \cite{Zhou17,Darnell17}; and (ii) $O\left(n^2 p\right)$ per iteration for $p \gg n$ \cite{Schelldorfer11,Schelldorfer14,Jakubik15}. In this paper, we present scalable algorithms with sublinear computational complexity in $p$, making the proposed approach useful for high-dimensional LMMs. In addition, we provide a theoretical analysis for our approach that states provable error guarantees between the estimated and ground-truth parameters.

Two sets of parameters are estimated in LMMs, the fixed-effects coefficients and the variances for the unobservable random effects and noise. The random-effects variance is generally assumed to have a certain structure, such as a block-diagonal matrix \cite{Laird82,Demidenko13}. To estimate both sets of parameters, an expectation maximization (EM) algorithm is typically used \cite{Laird87,Bates15} to handle the latent random-effects variables. The M-step in the EM algorithm incurs high computational costs due to matrix inversions. Newton-Raphson has been used to reduce the number of iterations required for parameter estimates to converge \cite{Lindstrom88}; however, each iteration is still costly due to matrix inversions. A recent research focus is to avoid matrix inversions at each iteration. For instance, when $n > p$ a spectral algorithm is available \cite{Patterson71,Kang08,Lippert11}. The state-of-the-art algorithm \cite{Darnell17} further improved the computational complexity of the spectral algorithm using randomized singular value decomposition \cite{Darnell17}.

While approximate learning algorithms \cite{Zhou17,Darnell17} are efficient, few provide provable guarantees in terms of estimation accuracy. Recently, a 
non-iterative algorithm with provable guarantees for estimates was proposed in \cite{Perry17}, which runs in $O\left(n\left(p + d\right)^4\right)$ time for $d$ random effects. Inference with guarantees for high-dimensional LMMs, i.e., $p \gg n$, typically incurs greater computational complexity due to the regularization required to address high-dimensional data \cite{Schelldorfer11,Schelldorfer14}. In the high-dimensional setting, most algorithms perform block coordinate descent with an $O\left(n^2 p\right)$ per-iteration cost \cite{Schelldorfer11,Schelldorfer14}. In this paper, we show that efficiency and provable guarantees can be achieved simultaneously for learning high-dimensional LMMs.

There are two key ideas we use in our efficient algorithms. The first idea is using an approximate estimator that relies on an $n\times n$ kernel matrix (\cref{sec:estimators}) which can be computed efficiently using the subsampled randomized Hadamard transform (SRHT) \cite{Tropp11}. This reduces the linear complexity dependence on $p$. Unlike some other approximation algorithms \cite{Lu13}, the proposed estimator also has the advantage of recovering the fixed-effects coefficients for all $p$ dimensions as opposed to the reduced dimensions. This allows us to provide effect sizes for all the original covariates, a requirement in many applications. The second idea is the introduction of {\em approximate variance components} (AVCs) to replace variance components when estimating the fixed-effects coefficients. These AVCs have a closed-form expression and can be computed efficiently.
  
We apply our novel approach to LMMs with both general covariance matrices as well as block-diagonal covariance matrices for the random effects. The former can be viewed as a special case of the latter with a single block, and has been adopted in genome-wide association studies \cite{Kang08,Lippert11,Zhou17}. LMMs with a block-diagonal covariance structure have been widely used for modeling repeated measures data \cite{Laird82} as well as for modeling batch effects \cite{Johnson07}. We propose a non-iterative algorithm for the general covariance setting and a fast EM variant for the block-diagonal setting.

\paragraph{Contribution}
Our main contribution is providing a class of approximation algorithms for parameter inference in high-dimensional LMMs with provable guarantees. In \cref{tbl:methods}, we state the computational complexity for several standard and state-of-the-art parameter inference algorithms. In the table and throughout this paper, $n$ is the sample size, $p$ is the number of covariates, $k$ is the rank of the covariance matrix, $s$ is the number of subsamples, and $\epsilon$ is the approximation error. Our method is the only one that is sublinear in $p$, and can be a $n/\log p$ magnitude faster than the others (discussed in \cref{sec:fixedLMM}). In addition to theoretical advantages, we demonstrate the empirical accuracy and speed of our method on both synthetic and real data in \cref{sec:experiments}. 

\begin{table}[tb]
\caption{Computational complexity for parameter inference. \textdagger \, denotes that the estimator has provable guarantees.}
\label{tbl:methods}
\begin{center}
\begin{threeparttable}
\begin{small}
\begin{sc}
\begin{tabular}{c|c}
\hline
REML \cite{Kang08,Lippert11} & $O\left(n^2 p\right)$\\ \hline
\tnote{\textdagger} Method-of-moments \cite{Perry17} & $O\left(n \left(p+q\right)^4\right)$\\ \hline
Subsampling \cite{Zhou17} & $O\left(p s^2\right)$\\ \hline
ARSVD \cite{Darnell17} & $O\left(p n k\right)$\\ \hline
\tnote{\textdagger} This work & $O\left( \frac{ n^2 \left(k + \log p\right) \log k }{\epsilon^2}\right)$\\ \hline
\end{tabular}
\end{sc}
\end{small}
\end{threeparttable}
\end{center}
\end{table}

\paragraph{Notation}
We denote the maximum and minimum eigenvalues of a matrix $\bm{A}$ by $\lambda_{\text{max}}\left(\bm{A}\right)$ and $\lambda_{\text{min}}\left(\bm{A}\right)$, respectively. Similarly, we denote the $j$-th, maximum, and minimum singular values respectively by $\sigma_j\left(\bm{A}\right)$, $\sigma_{\text{max}}\left(\bm{A}\right)$, and $\sigma_{\text{min}}\left(\bm{A}\right)$. $\bm{A}^\dagger$ represents the Moore–Penrose pseudoinverse of $\bm{A}$, and $\kappa\left(\bm{A}\right)$ denotes the condition number of $\bm{A}$. The superscripted notation $\bm{y}^{\left(i\right)}$ refers to the copy of $\bm{y}$ for group $i$. We state the spectral norm of a matrix as $\left\|\cdot\right\|_2$, the Frobenius norm as $\left\|\cdot\right\|_F$, and the Ky Fan $k$-norm (the sum of the $k$ largest singular values) as $\vertiii{\cdot}_k$.

\paragraph{Organization} Section \ref{sec:rlmm} provides background on standard LMMs. In section \ref{sec:estimators}, we formulate $L_2$-regularized LMMs and present approximate estimators based on a kernel matrix. Section \ref{sec:kernel-comp} states a fast procedure to compute the approximate estimators. In section \ref{sec:guarantees}, we provide theoretical guarantees for our estimators. Section \ref{sec:experiments} reports empirical evidence of the speed and accuracy of our methods, and section \ref{sec:concl} concludes this paper.

\section{Linear Mixed Models}
\label{sec:rlmm}
Consider a regression problem with $n$ observations, where $\bm{y} \in \mathbb{R}^{n}$ denotes the response vector and $\bm{X} \in \mathbb{R}^{n \times p}$ represents the covariates matrix with $p$ covariates. The standard LMM is given by
\begin{gather}
\begin{split}
\bm{y} = \bm{X} \bm{\beta} + \bm{Z} \bm{\gamma} + c\bm{1} + \bm{e} \qquad \text{with} \qquad
\begin{bmatrix}
\bm{\gamma}\\
\bm{e}
\end{bmatrix}
\sim \text{MVN}\left(\bm{0},
\begin{bmatrix}
\bm{\Lambda} & \bm{0}\\
\bm{0} & \sigma^2\bm{I}
\end{bmatrix}
\right),
\end{split}
\label{eq:lmm}
\end{gather}
where $\bm{\beta} \in \mathbb{R}^p$ is the fixed-effects coefficient vector, $\bm{Z} \in \mathbb{R}^{n \times q}$ is a full-rank random-effects design matrix, $\bm{\gamma} \in \mathbb{R}^{q}$ is the random-effects coefficient vector, c is the intercept, and $\bm{e} \in \mathbb{R}^n$ is the noise vector. The parameters to be estimated are the fixed-effects coefficients $\bm{\beta}$, and variance components $\bm{\Lambda}$ and $\sigma^2$. 

In general, the variables $\bm{X}$, $\bm{y}$, $\bm{\gamma}$, and $\bm{e}$ in \eqref{eq:lmm} correspond to observations from $m$ classes, and are grouped by the following structure \cite{Laird82}:
\begin{align*}
\bm{X} = \begin{bmatrix}
\bm{X}^{\left(1\right)}\\
\bm{X}^{\left(2\right)}\\
\vdots\\
\bm{X}^{\left(m\right)}
\end{bmatrix}, \qquad
\bm{y} = \begin{bmatrix}
\bm{y}^{\left(1\right)}\\
\bm{y}^{\left(2\right)}\\
\vdots\\
\bm{y}^{\left(m\right)}
\end{bmatrix}, \qquad
\bm{\gamma} = \begin{bmatrix}
\bm{\gamma}^{\left(1\right)}\\
\bm{\gamma}^{\left(2\right)}\\
\vdots\\
\bm{\gamma}^{\left(m\right)}
\end{bmatrix}, \qquad
\bm{e} = \begin{bmatrix}
\bm{e}^{\left(1\right)}\\
\bm{e}^{\left(2\right)}\\
\vdots\\
\bm{e}^{\left(m\right)}
\end{bmatrix},
\end{align*}
where $\cdot^{\left(i\right)}$ denote the variables specific to group $i$, whose dimensions are $\bm{X}^{\left(i\right)} \in \mathbb{R}^{n_i \times p}$, $\bm{\gamma}^{\left(i\right)} \in \mathbb{R}^d$, and $\bm{y}^{\left(i\right)},\bm{e}^{\left(i\right)} \in \mathbb{R}^{n_i}$, $\sum_{i=1}^m n_i = n$. The LMM assumes that the $\bm{\gamma}^{\left(i\right)}$ corresponding to distinct classes are independent. In particular, the random-effects design matrix $\bm{Z}$ and the random-effects covariance are block-diagonal 
\begin{align*}
\bm{Z} = \begin{bmatrix}
\bm{Z}^{\left(1\right)} & & \bm{0}\\
&\ddots &\\
\bm{0} & &\bm{Z}^{\left(m\right)}
\end{bmatrix}, \qquad 
\bm{\Lambda} &= \begin{bmatrix}
\bm{H} & & \bm{0}\\
&\ddots &\\
\bm{0} & &\bm{H}
\end{bmatrix}
\end{align*}
with $\bm{Z}^{\left(i\right)} \in \mathbb{R}^{n_i \times d}$, $\bm{H} \in \mathbb{R}^{d\times d}$, and $q = m d$.

\paragraph{Computational challenges} 
Parameter inference in LLMs requires accurately recovering $\mathcal{P} \coloneqq \left\{\bm{\beta},\bm{\Lambda},\sigma^2\right\}$ from $\left\{\bm{X},\bm{y},\bm{Z}\right\}$. This is straightforward if $\bm{\Lambda}$ is given. When $\bm{\Lambda}$ is unknown, inference can be computationally challenging even in the standard setting where $n > p$ \cite{Patterson71,Laird82,Laird87,Lindstrom88,Zhang08,Lippert11,Zhang11,Zhou17}.

First, parameter estimation problem is non-convex for both maximum likelihood and restricted maximum likelihood (REML) \cite{Patterson71,Harville74,Laird87}. For instance, methods using REML \cite{Kang08,Lippert11} project the data onto two uncorrelated parts, and then estimate the fixed-effects and variance components separately on each part. This has the advantage of providing unbiased estimates of the variance components. However, the REML likelihood function is a non-convex function which involves the eigenvalues of the variance of the projected data \cite{Patterson71}.

Second, regularization is typically required to support the high-dimensional setting, which adds further computational overhead \cite{Lippert11,Schelldorfer11,Schelldorfer14,Jakubik15,Zhou17}. To address these challenges, we develop novel approximate estimators that are efficient to compute (\cref{sec:kernel-comp}), and have provable accuracy guarantees (\cref{sec:guarantees}).

\section{Approximate Estimators for High-Dimensional LMMs}
\label{sec:estimators}
In this section, we consider $\ell_2$-regularized LMMs to address the high-dimensional setting $p > n$, and develop efficient approximate estimators for the parameters.

Standard parameter estimation algorithms for LMMs such as \cite{Laird87,Kang08,Bates15} do not support the high-dimensional setting $p > n$. A standard correction to extend LMMs to the high-dimensional setting is to introduce
$\ell_2$ regularization on the fixed-effects coefficients, which can be viewed as adding the prior $\bm{\beta} \sim \mathcal{N}\left(\bm{0},\bm{\Phi}\right)$. The $\ell_2$-regularized LMM has the following log-likelihood
\begin{align}
\begin{split}
\log p\left(\bm{y}, \bm{\beta} \mid \bm{X} ; \bm{V}\right)
\propto -\frac{1}{2} \bm{\beta}^\top \bm{\Phi}^{-1} \bm{\beta} - \frac{1}{2} \log\det\bm{V} -\frac{1}{2}  \left(\bm{y} - \bm{X}\bm{\beta} - c\bm{1}\right) \bm{V}^{-1} \left(\bm{y} - \bm{X}\bm{\beta} - c\bm{1}\right)
\end{split}
\label{eq:lmm-log-lik}
\end{align}
with the marginal variance $\bm{V} \coloneqq \bm{Z}\bm{\Lambda}\bm{Z}^\top + \sigma^2\bm{I}$.

Parameter estimation of an LMM is typically iterative and computationally prohibitive, especially in the high-dimensional setting \cite{Schelldorfer11,Darnell17,Perry17}. To improve the computational efficiency, we propose an approximate estimator that makes use of a dual representation. The approximation algorithm efficiently estimates the dual representation. These estimators are non-iterative and have reduced computational complexity, as we will show in \cref{sec:kernel-comp}.

\subsection{Fixed-effects coefficients}
We first derive the estimators for the fixed-effects coefficients $\widehat{\bm{\beta}}$ and $\widehat{c}$, which are the maximizers of the log-likelihood \eqref{eq:lmm-log-lik}. A dual estimator of $\bm{\beta}$ is then stated that is useful in the high-dimensional setting. Using the partial derivatives, it is straightforward to show
\begin{gather}
\left(\bm{X}^\top\bm{V}^{-1}\bm{X} + \bm{\Phi}^{-1}\right)\widehat{\bm{\beta}} = \bm{X}^\top\bm{V}^{-1}\left(\bm{y} - \widehat{c}\bm{1}\right) \label{eq:beta-primal}\\
\widehat{c} = \frac{\bm{1}^\top\bm{V}^{-1}\bm{y} - \bm{1}^\top\bm{V}^{-1}\bm{X}\widehat{\bm{\beta}}} {\bm{1}^\top\bm{V}^{-1}\bm{1}}. \label{eq:c-est}
\end{gather}
Let $\bm{L} = \bm{I} - \bm{1}\bm{1}^\top\bm{V}^{-1}\left(\bm{1}^\top\bm{V}^{-1}\bm{1}\right)^{-1}$, we obtain
\begin{align}
\label{eq:opt-beta}
\widehat{\bm{\beta}} = \left(\bm{X}^\top\bm{V}^{-1}\bm{L}\bm{X} + \bm{\Phi}^{-1}\right)^{-1}\bm{X}^\top\bm{V}^{-1}\bm{L}\bm{y}.
\end{align}
The dual estimator using $\bm{X}\bm{\Phi}\bm{X}^\top$ was proposed in \cite{Saunders98} where the authors used Lagrange multipliers to obtain the following estimator for ridge regression
\begin{align*}
\widehat{\bm{\beta}}_{\text{Dual}} = \bm{\Phi}\bm{X}^\top\left(\bm{V} + \bm{X}\bm{\Phi}\bm{X}^\top\right)^{-1}\bm{y}.
\end{align*}
Here, $\bm{\Phi}$ is set to be diagonal, and the above estimator \eqref{eq:beta-est-dual} can be evaluated in $O\left(n^2 p\right)$ time, a significant improvement when $p \gg n$. However, the computational bottleneck is evaluating the {\em kernel matrix} $\bm{X}\bm{\Phi}\bm{X}^\top$.

For the zero intercept case $\widehat{c} = 0$, the dual estimator \eqref{eq:beta-est-dual} is equivalent to \eqref{eq:opt-beta} by the following variant of the Woodbury identity $\left(\bm{U}^{-1} + \bm{A}^\top\bm{V}^{-1}\bm{A}\right)^{-1}\bm{A}^\top\bm{V}^{-1} = \bm{U}\bm{A}^\top\left(\bm{A}\bm{U}\bm{A}^\top + \bm{V}\right)^{-1}$ for invertible matrices $\bm{U}$ and $\bm{V}$. The dual estimator can be generalized to any intercept,
\begin{align}
\label{eq:beta-est-dual}
\widehat{\bm{\beta}} = \bm{\Phi}\bm{X}^\top\bm{V}^{-1}\bm{L}\left(\bm{X}\bm{\Phi}\bm{X}^\top\bm{V}^{-1}\bm{L} + \bm{I}\right)^{-1}\bm{y}.
\end{align}
Computing the dual estimator \eqref{eq:beta-est-dual} takes $O\left(n^2 p\right)$ time as opposed to $O\left(p^3\right)$ time required by \eqref{eq:opt-beta}. This complexity will be further improved in \cref{sec:kernel-comp} for the setting $p \gg n$.

\subsection{Approximate variance components}

The variance components $\bm{\Lambda}$ and $\sigma^2$ are typically estimated using an iterative EM algorithm with a per-iteration cost $O\left(p^3\right)$ \cite{Laird87,Lindstrom88} or an exhaustive grid search for the solution of a system of eigenvalue equations \cite{Kang08,Lippert11}. We consider an approximate non-iterative estimator based on the key observation that the optimization of the \eqref{eq:lmm-log-lik} has a simple closed-form solution if carried out with respect to $\bm{M} = \bm{V} + \bm{X}\bm{\Phi}\bm{X}^\top$. We will estimate $\bm{M}$ and use it as a proxy for estimating $\bm{\Lambda}$ as well as $\sigma^2$. The variance components inferred using $\bm{M}$ are referred to as the {\em approximate variance components} (AVCs). While AVCs may be used as variance components estimates under certain circumstances, their main purpose is to be an efficiently computable quantity used to estimate the fixed-effects coefficients. 

\paragraph{Proxy component estimation}

To perform the REML estimation of the variance components in terms of $\bm{M}$, we first rewrite the log-likelihood \eqref{eq:lmm-log-lik} as
\begin{align}
\label{eq:lik-decomp}
\begin{split}
l\left(\bm{\beta},\bm{V}\right)
&= -\frac{1}{2} \log\det\bm{V} -\frac{1}{2} \left(\bm{y} - \widehat{c}\bm{1}\right)^\top \bm{M}^{-1} \left(\bm{y} - \widehat{c}\bm{1}\right) -\frac{1}{2} \left(\bm{\beta} - \widehat{\bm{\beta}}\left(\bm{V}\right)\right)^\top \bm{Q} \left(\bm{\beta} - \widehat{\bm{\beta}}\left(\bm{V}\right)\right)
\end{split}
\end{align}
with
\begin{gather*}
\bm{Q} = \bm{X}^\top \bm{V}^{-1} \bm{X} + \bm{\Phi}^{-1} \\
\widehat{\bm{\beta}}\left(\bm{V}\right) = \left(\bm{X}^\top \bm{V}^{-1} \bm{X} + \bm{\Phi}^{-1}\right)^{-1} \bm{X}^\top \bm{V}^{-1} \left(\bm{y} - \widehat{c}\bm{1}\right).
\end{gather*}
Here, the estimate $\widehat{\bm{\beta}}$ depends on $\bm{V}$, and is consistent with the estimate given by \eqref{eq:opt-beta}. The $\widehat{c}$ in \eqref{eq:lik-decomp} can be set to the mean response, or estimated based on a prior distribution as in \cite{Zhou13}.

The REML estimator for the variance components is based on marginalizing the fixed effects $\bm{\beta}$ \cite{Harville74}. It follows that
\begin{align*}
l_p\left(\bm{V}\right) &\propto \log \int_{\mathbb{R}^p} \exp \left(l\left(\bm{\beta},\bm{V}\right)\right) d \bm{\beta}\\
&\propto -\frac{1}{2} \log\det\bm{V} -\frac{1}{2} \log\det\bm{Q} -\frac{1}{2} \left(\bm{y}-\widehat{c}\bm{1}\right)^\top \bm{M}^{-1}\left(\bm{y}-\widehat{c}\bm{1}\right).
\end{align*}
From Sylvester's determinant theorem, one observes that $\det\left(\bm{M}\right) = \det\left(\bm{\Phi}\right)\det\left(\bm{V}\right)\det\left(\bm{Q}\right)$. Thus, we arrive at
\begin{align}
\label{eq:v-reml}
\begin{split}
l_p\left(\bm{V}\right) &\propto -\frac{1}{2}  \log\det\bm{M} -\frac{1}{2}  \left(\bm{y}-\widehat{c}\bm{1}\right)^\top \bm{M}^{-1}\left(\bm{y}-\widehat{c}\bm{1}\right).
\end{split}
\end{align}

What we have achieved through \eqref{eq:v-reml} is a simple closed-form REML estimate of $\bm{V}$, rather than the non-convex or iterative updates for $\widehat{\bm{\Lambda}}$ and $\widehat{\sigma}^2$ required in state-of-the-art LMM parameter estimation algorithms. Unconstrained maximization of \eqref{eq:v-reml} with respect to $\bm{M}$ results in the closed-form equality 
\begin{align}
\bm{Z}\widehat{\bm{\Lambda}}\bm{Z}^\top + \widehat{\sigma}^2 \bm{I} = \left(\bm{y}-\widehat{c}\bm{1}\right)\left(\bm{y}-\widehat{c}\bm{1}\right)^\top - \bm{X}\bm{\Phi}\bm{X}^\top, \label{eq:proxy-sol}
\end{align}
for an optimal $\bm{M}$. Note that $\bm{Z}\widehat{\bm{\Lambda}}\bm{Z}^\top$ is positive semidefinite, whereas the right hand side has at most one positive eigenvalue. Thus, this optimal $\bm{M}$ may not be achievable and the unbiased estimate of $\bm{\Lambda}$ may possibly have negative eigenvalues. The issue of negative variance estimates in linear mixed models is an open problem \cite{Demidenko13} and beyond the scope of this paper. One resolution is to introduce a Gamma prior on $\bm{\Lambda}$ \cite{Chung13}. For unbiased estimation, we allow $\bm{\Lambda}$ to have negative eigenvalues, and intuitively we refer to the variance estimators obtained this way as {\em approximate variance components}.

\paragraph{Approximate variance estimators}
Assume that $\bm{Z}$ has full column rank and let 
\begin{align*}
\bm{S} = \left(\bm{y}-\widehat{c}\bm{1}\right)\left(\bm{y}-\widehat{c}\bm{1}\right)^\top - \bm{X}\bm{\Phi}\bm{X}^\top.
\end{align*} 
The approximate variance components $\widehat{\bm{\Lambda}}_{\text{AVC}}$ and $\widehat{\sigma}_{\text{AVC}}^2$ can be obtained via the following minimization problem
\begin{align}
\label{eq:fro-est}
\arg\min_{\bm{\Lambda}, \sigma^2} \left\|\bm{Z}\bm{\Lambda}\bm{Z}^\top - \bm{S} + \sigma^2\bm{I}\right\|_F^2.
\end{align}
Optimizing with respect to $\bm{\Lambda}$ yields 
\begin{align}
\label{eq:k-est-sig}
\bm{\Lambda}_\star = \bm{Z}^\dagger\left(\bm{S} - \sigma^2\bm{I}\right)\bm{Z}^{\dagger\top},
\end{align}
where $\bm{Z}^\dagger \coloneqq \left(\bm{Z}^\top\bm{Z}\right)^{-1}\bm{Z}^\top$. The estimators are computed by substituting $\bm{\Lambda}_\star$ into \eqref{eq:fro-est} and optimizing with respect to $\sigma^2$:
\begin{align}
\label{eq:ivc-est}
\begin{split}
\widehat{\sigma}_{\text{AVC}}^2 = \frac{\tr\left[\bm{S} \left(\bm{I} - \bm{Z}\bm{Z}^\dagger \right)\right]} {n - q} \qquad \text{and} \qquad
\widehat{\bm{\Lambda}}_{\text{AVC}} = \bm{Z}^\dagger \bm{S} \bm{Z}^{\dagger\top}
- \widehat{\sigma}_{\text{AVC}}^2 \left(\bm{Z}^\top\bm{Z}\right)^{-1}.
\end{split}
\end{align}

For a more robust estimator instead of optimizing one can compute the expectation under prior distributions for the objective in \eqref{eq:fro-est}, or use certain parameterizations of the variances. For example, consider the parameterization $\bm{\Lambda} = \theta \bm{D}$ in \cite{Kang08,Lippert11} with a fixed symmetric positive semi-definite $\bm{D}$, the solution to \eqref{eq:fro-est} is written as
\begin{align}
\label{eq:k-opt}
\bm{\Lambda}_* = \frac{\tr\left(\bm{G}\left(\bm{S} - \sigma^2\bm{I}\right)\right)}{\tr \left(\bm{G}^2\right)} \bm{D} \qquad \text{with} \qquad \bm{G} = \bm{Z}\bm{D}\bm{Z}^\top.
\end{align}
Substituting into \eqref{eq:fro-est}, we obtain
\begin{align}
\label{eq:sigma-est}
\widehat{\sigma}_{\text{AVC}}^2 = \frac{1}{n - \alpha}\left[\tr\left(\bm{S}\right) - \frac{\tr\left(\bm{G}\bm{S}\right)}{\tr\left(\bm{G}^2\right)}\right],
\end{align}
where $\alpha = \tr\left(\bm{G}\right)^2 / \tr\left(\bm{G}^2\right)$. Combined with \eqref{eq:k-opt}, we arrive at
\begin{align}
\label{eq:k-est}
\widehat{\bm{\Lambda}}_{\text{AVC}} = \frac{\tr\left(\bm{G}\left(\bm{S} - \widehat{\sigma}_{\text{AVC}}^2\bm{I}\right)\right)}{\tr \left(\bm{G}^2\right)} \bm{D}.
\end{align}

We use the AVCs to speed up estimating the fixed-effects coefficients. The complexity for computing the AVCs is $O\left(n^3\right)$, if $\bm{S}$ is given. Like the dual fixed-effects estimator \eqref{eq:beta-est-dual}, the computational bottleneck fpr AVCs also lies in evaluating $\bm{X}\bm{\Phi}\bm{X}^\top$.

\section{Fast Computational Algorithms}
\label{sec:kernel-comp}
In this section, we further improve the computational complexity $O\left(n^2 p\right)$ of the proposed approximate estimators in the high-dimensional setting $p \gg n$, where the computation bottleneck lies in evaluating the kernel $\bm{X}\bm{\Phi}\bm{X}^\top$.
We adopt the subsampled randomized Hadamard transform (SRHT) \cite{Tropp11} to compute the kernel matrix efficiently. In particular, the high-dimensional data is first projected into lower dimensions using the SRHT, and the parameters of the LMM are then estimated using the projected data. However, there are two main challenges involved: 1) the estimated parameter $\widehat{\bm{\beta}}$ has the reduced dimension of the projected data of reduced dimensions rather than the dimensionality of the original covariates; and 2) the impact of applying the SRHT on the accuracy of parameter estimation needs to be justified. The techniques developed in this section recovers the coefficients for all the covariates from the SRHT projected data with high accuracy, as will be shown in \cref{sec:guarantees}.

\subsection{Non-iterative algorithm for general LMMs}
\label{sec:fixedLMM}

In this subsection, we provide a fast algorithm for parameter estimation for a general covariance matrix. \cref{alg:approx} takes as input the matrices $\bm{X}$ and $\bm{\Phi}$ (which will be typically diagonal) and an approximation error $\epsilon$ described in \cref{sec:guarantees}. Both an approximate kernel matrix
$\bm{X}\bm{\Phi}\bm{X}^\top$ and the SRHT matrix $\bm{\Pi}$ are computed. The computational efficiency of the algorithm is a result of replacing $\bm{X}$ with the smaller transformed $\bm{A}$ in subsequent operations. Additionally, the structure of the SRHT allows for a divide-and-conquer scheme to compute $\bm{A} = \bm{X} \sqrt{\bm{\Phi}} \bm{\Pi}^\top$ in $O\left(n p \log p\right)$ time. Note that the matrix $\bm{W}_{p^\prime}$ is not formed explicitly. The computation $\bm{A}\bm{A}^\top$ requires $O\left(n^2 s_\epsilon\right)$ time, which becomes dominant setting $\epsilon \leq C n \sqrt{\frac{\log n}{p \log p}}$ for some universal constant $C$. Thus, the overall runtime for the algorithm is $O\left(\frac{n^3 \log n}{\epsilon^2} \right)$ for dense full-rank $\bm{X}$, and will be faster if $\bm{X}$ is low rank. The quality of the approximation depends on $\epsilon$, which will be discussed in \cref{sec:guarantees}.

\begin{algorithm}[tb]
\caption{Approximate kernel matrix computation.}
\begin{algorithmic}[1]
\label{alg:approx}
\REQUIRE {$\bm{X}$, $\bm{\Phi}$, and error tolerance $\epsilon$. }
\STATE {Let $p^\prime = 2^{\ceil*{\log_2 p}}$, append $p^\prime - p$ all zero columns to $\bm{X}$, and $p^\prime - p$ all zero rows and columns to $\bm{\Phi}$. Compute a diagonal matrix $\bm{D}$ of dimension $p^\prime$ with Rademacher random diagonal elements. }
\STATE {Denote the fast Walsh-Hadamard transform by
\begin{align*}
\bm{W}_{p^\prime} = \begin{bmatrix}
\bm{W}_{p^\prime/2} & \bm{W}_{p^\prime/2}\\
\bm{W}_{p^\prime/2} & - \bm{W}_{p^\prime/2}
\end{bmatrix}  \quad \text{with} \quad \bm{W}_1 = 1. 
\end{align*}
Let $r$ be the rank of $\bm{X}$ or $r = n$ for unknown rank, then define
\begin{align*}
s_{\epsilon} \coloneqq \frac{6 \left[\sqrt{r} + \sqrt{8 \log \left(r p^\prime\right)}\right]^2 \log r}{\epsilon^2}.
\end{align*}
Sample without replacement $m$ rows of $\bm{W}_{p^\prime}\bm{D} / \sqrt{s_{\epsilon}}$ to obtain the SRHT $\bm{\Pi}$. Compute the transformed covariate matrix $\bm{A} = \bm{X} \sqrt{\bm{\Phi}} \bm{\Pi}^\top$.
}
\STATE{{\bf return} the approximate kernel $\bm{A}\bm{A}^\top$, $\bm{A}$, and $\bm{\Pi}$. }
\end{algorithmic}
\end{algorithm}

Given the approximate kernel, it is straight forward to compute the AVCs $\bm{\Lambda}_{\text{AVC}}$ and $\sigma_{\text{AVC}}^2$ via \eqref{eq:ivc-est}. The coefficients for the fixed-effects can also be computed efficiently using the following estimator
\begin{align}
\widehat{\bm{\beta}}
= \sqrt{\bm{\Phi}}\bm{\Pi}^\top\bm{A}^\top\widehat{\bm{V}}^{-1}\bm{L}\left(\bm{I} + \bm{A}\bm{A}^\top\widehat{\bm{V}}^{-1}\bm{L}\right)^{-1}\bm{y}.
\label{eq:est-beta-fast}
\end{align}
Given the approximate kernel matrix and $\bm{A}$, computing $\bm{A}^\top\widehat{\bm{V}}^{-1}\bm{L}\left(\bm{I} + \bm{A}\bm{A}^\top\widehat{\bm{V}}^{-1}\bm{L}\right)^{-1}\bm{y}$ takes time $O\left(\max\left\{n^2 s_{\epsilon}, n^3\right\}\right)$ and multiplication of this vector by $\sqrt{\bm{\Phi}}\bm{\Pi}^\top$ is
$O\left(p \log p \right)$ due to the structure of the SRHT matrix as well as the fact that $\sqrt{\bm{\Phi}}$ is diagonal. The resulting complexity in computing \eqref{eq:est-beta-fast} is $O\left(\max\left\{n^2 s_{\epsilon}, n^3, p\log p\right\}\right)$.

Approximating the kernel matrix using the SRHT was proposed for ridge regression in \cite{Lu13}, a special case of our setting. A method for estimating the full set of fixed-effects coefficients was not provided in \cite{Lu13}. Instead, a reduced set of $m$ fixed-effects coefficients corresponding to the transformed covariates $\bm{X} \bm{\Pi}^\top$ was computed. For many applications, a major point of using an LMM is to estimate the effect-size of the fixed-effects coefficients, so computing $\widehat{\bm{\beta}}$ is essential to the problem.

\subsection{Fast EM for multi-group LMMs}
\label{sec:generalization}

For efficient parameter estimation in $ell_2$-regularized LMMs with repeated measurements, we extend the EM algorithm for the low-dimensional setting $n \geq p$ \cite{Laird87} by combing the kernel estimators and \cref{alg:approx}. While this high-dimensional EM variant is iterative, we show that the per-iteration computational cost is scalable in $p$.

The log-likelihood of the $\ell_2$-regularized LMM \eqref{eq:lmm2-log-lik} can be rewritten in terms of class-specific variables as
\begin{align}
\begin{split}
\log p\left(\bm{y}, \bm{\gamma}, \bm{\beta} \mid \bm{X} ; \sigma^2,\bm{\Lambda}\right)
&\propto -\frac{1}{2} \bm{\beta}^\top \bm{\Phi}^{-1} \bm{\beta}
-\frac{n}{2} \log \sigma^2 - \frac{m}{2} \log\det\bm{H}\\
&\phantom{=} -\frac{1}{2} \sum_{i=1}^m \bm{\gamma}^{\left(i\right)\top}\bm{H}^{-1}\bm{\gamma}^{\left(i\right)}
-\frac{\bm{e}^\top\bm{e}}{2\sigma^2},
\end{split}
\label{eq:lmm2-log-lik}
\end{align}
where $\bm{e} = \bm{y} - c\bm{1} - \bm{X}\bm{\beta} - \bm{Z}\bm{\gamma}$.

From the above log-likelihood, the distribution of $\bm{\beta}$ conditioned on the data and parameter estimates $\widehat{\mathcal{P}} \coloneqq \left\{\widehat{c},\widehat{\sigma}^2,\widehat{\bm{H}}\right\}$ is is multivariate normal with mean $\bm{\Phi}\bm{X}^\top\widehat{\bm{M}}^{-1}\left(\bm{y} - \widehat{c}\bm{1}\right)$ and covariance $\bm{\Phi} - \bm{\Phi}\bm{X}^\top\widehat{\bm{M}}^{-1}\bm{X}\bm{\Phi}$. Similarly, the posterior distribution of the vector of latent variables $\bm{\gamma}$ is multivariate normal with mean $\widehat{\bm{\Lambda}}\bm{Z}^\top\widehat{\bm{M}}^{-1}\left(\bm{y} - \widehat{c}\bm{1}\right)$ and covariance $\widehat{\bm{\Lambda}} - \widehat{\bm{\Lambda}}\bm{Z}^\top\widehat{\bm{M}}^{-1}\bm{Z}\widehat{\bm{\Lambda}}$. Denote by $\widehat{\bm{\gamma}}$ the mean of the posterior distribution of $\bm{\gamma}$, we also obtain the following posterior distributions of class-specific latent variables $\bm{\gamma}^{\left(i\right)}$:
\begin{align}
\label{eq:posterior-gamma}
\mathcal{N}\left(\widehat{\bm{\gamma}}^{\left(i\right)},
\widehat{\bm{H}} - \widehat{\bm{H}}\bm{Z}^{\left(i\right)\top}\left(\widehat{\bm{M}}^{-1}\right)^{\left(i\right)}\bm{Z}^{\left(i\right)}\widehat{\bm{H}}\right).
\end{align}
Note that $\cdot^{\left(i\right)}$ represents the block matrix corresponding to group $i$.
These posteriors are used in the E-step, discussed next.

\paragraph{E-step} In the E-step, we derive the expectation of the log-likelihood \eqref{eq:lmm2-log-lik} with respect to the aforementioned posterior distribution of $\bm{\beta}$ and $\bm{\gamma}^{\left(i\right)}$:
\begin{align*}
\mathbb{E}_{\bm{\beta},\bm{\gamma} \mid \bm{y},\widehat{\mathcal{P}}} \left[
\log p\left(\bm{y}, \bm{\gamma}, \bm{\beta} \mid \bm{X} ; \sigma^2,\bm{H}\right)
\right].
\end{align*}
We only need to consider terms in the expectation that involve $c$, $\sigma^2$, and $\bm{H}$. Denote by $\widehat{\Sigma}_{\bm{\gamma}^{\left(i\right)}}$ the variance of \eqref{eq:posterior-gamma}, the following holds  $\mathbb{E}_{\bm{\beta},\bm{\gamma} \mid \bm{y},\widehat{\mathcal{P}}}\left(\bm{\gamma}^{\left(i\right)\top}\bm{H}^{-1}\bm{\gamma}^{\left(i\right)}\right) =
\widehat{\bm{\gamma}}^{\left(i\right)\top}\bm{H}^{-1}\widehat{\bm{\gamma}}^{\left(i\right)} +
\tr \left(\widehat{\bm{\Sigma}}_{\bm{\gamma}^{\left(i\right)}}\bm{H}^{-1}\right)$. Using the previously derived posterior distributions, we get $\mathbb{E}\left(\bm{e} \mid \bm{y}, \widehat{\mathcal{P}} \right) = \bm{y} - \bm{X}\widehat{\bm{\beta}} - \bm{Z}\widehat{\bm{\gamma}} - \widehat{c}\bm{1}$ and
\begin{align*}
\text{cov}\left(\bm{e} \mid \bm{y}, \widehat{\mathcal{P}} \right) 
&= \bm{X}\left(\bm{\Phi} - \bm{\Phi}\bm{X}^\top \widehat{\bm{M}}^{-1} \bm{X}\bm{\Phi}\right)\bm{X}^\top +
\bm{Z}\left( \widehat{\bm{\Lambda}} - \widehat{\bm{\Lambda}}\bm{Z}^\top \widehat{\bm{M}}^{-1} \bm{Z}\widehat{\bm{\Lambda}}\right)\bm{Z}^\top\\
&= \widehat{\bm{\Sigma}} - \widehat{\bm{\Sigma}}\left(\widehat{\bm{\Sigma}} + \widehat{\sigma}^2\bm{I}\right)^{-1}\widehat{\bm{\Sigma}}\\
&= \left(\bm{\Sigma}^{-1} + \widehat{\sigma}^{-2}\bm{I}\right)^{-1}\\
&= \widehat{\sigma}^2\bm{I} - \widehat{\sigma}^4\widehat{\bm{M}}^{-1}
\end{align*}
with $\widehat{\bm{\Sigma}} \coloneqq \bm{X}\bm{\Phi}\bm{X}^\top + \bm{Z}\widehat{\bm{\Lambda}}\bm{Z}^\top$. Thus, we arrive at $\mathbb{E}_{\bm{\beta},\bm{\gamma} \mid \bm{y},\widehat{\mathcal{P}}}\left(\bm{e}^\top\bm{e}\right) = \widehat{\bm{e}}^\top\widehat{\bm{e}} + \widehat{\sigma}^2\bm{I} - \widehat{\sigma}^4\widehat{\bm{M}}^{-1}$, where $\widehat{\bm{e}} \coloneqq \mathbb{E}\left(\bm{e} \mid \bm{y},\widehat{\mathcal{P}} \right)$.

\paragraph{M-step} We now update the parameter estimates by maximizing the expectation from the E-step. 
First, observe that the $\bm{\beta}$ estimate from the posterior distribution is the same as the the dual estimator developed in \cref{sec:estimators}. To maximize the expectation with respect to $\bm{H}$ and $\sigma^2$, we take the partial derivatives with respect to $\bm{H}^{-1}$ and $\sigma^{-2}$, and set them to zero. This gives the following M-step updates
\begin{gather}
\label{eq:em-var-est}
\begin{split}
\widehat{\bm{H}} \leftarrow \frac{1}{m}\sum_{i=1}^m \left(
\widehat{\bm{\gamma}}^{\left(i\right)}\widehat{\bm{\gamma}}^{\left(i\right)\top} 
+ \widehat{\Sigma}_{\bm{\gamma}^{\left(i\right)}}\right), \qquad
\widehat{\sigma}^2 \leftarrow \widehat{\sigma}^2 + \frac{1}{n} \left[ \widehat{\bm{e}}^\top\widehat{\bm{e}} - \widehat{\sigma}^4 \tr \left(\widehat{\bm{M}}^{-1}\right)\right].
\end{split}
\end{gather}

The fast version of the above EM algorithm uses \cref{alg:approx} for computing the kernel. Note that the original $\bm{X}$ is no longer needed after the SRHT projection. This provides additional space advantages as the data $\bm{X}$ can be preprocessed, and the Hadamard transform in Step $1$ requires a small constant amount of memory. Overall, the per-iteration computational complexity of the EM algorithm is $O\left(\max\left\{n^2 s_{\epsilon}, n^3\right\}\right)$.

\section{Theoretical Guarantees}
\label{sec:guarantees}

In this section, we provide an analysis of the difference in the parameters estimated via the approximate algorithms versus minimizing the $\ell_2$-regularized LMM. We are not proving consistency of our estimator---convergence of the parameter estimates to the population quantity. Consistency results for LMMs and-regularized LMMs were provided in \cite{Hall03,Cui04,Schelldorfer11}.

\begin{theorem} [Fixed-effects norm error]
\label{thm:beta-approx}
Let $\widehat{\bm{\beta}}$ be the fixed-effects coefficients estimated by \eqref{eq:opt-beta} and $\widehat{\bm{\beta}}^\prime$ be the fixed-effects coefficients estimated by the approximate procedure in \eqref{eq:est-beta-fast}. Then, with probability at least $1-3/n$ \begin{align*}
\frac{\left\|\widehat{\bm{\beta}} - \widehat{\bm{\beta}}^\prime\right\|}{\left\|\widehat{\bm{\beta}}\right\|}
\leq
\frac{\epsilon}{1 - \epsilon}
\frac{\left\|\bm{\Phi}^{-1}\right\|_2 \kappa\left(\bm{\Gamma}\right)}{\frac{\left\|\bm{\Phi}\right\|_2^{-1}}{1 + \sqrt{2/3}\epsilon} + \lambda_{\text{min}}\left(\bm{X}^\top\bm{V}^{-1}\bm{X}\right)}
\end{align*}
with $\bm{\Gamma} \coloneqq \bm{\Phi}^{-1} + \bm{X}^\top\bm{V}^{-1}\bm{X}$, or loosely
\begin{align*}
\frac{\left\|\widehat{\bm{\beta}} - \widehat{\bm{\beta}}^\prime\right\|} {\left\|\widehat{\bm{\beta}}\right\|}
\leq \frac{\epsilon\left(1 + \sqrt{2/3}\epsilon\right)}{1 - \epsilon} \kappa\left(\bm{\Phi}\right)
\kappa\left(\bm{\Gamma}\right) 
\end{align*}
for all $0 \leq \epsilon < 1$.
\end{theorem}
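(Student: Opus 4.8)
\emph{Proof sketch.} The plan is to recognize $\widehat{\bm{\beta}}^\prime$ as the exact dual estimator \eqref{eq:beta-est-dual} with the prior covariance $\bm{\Phi}$ replaced by the rank-$s_\epsilon$ matrix $\bm{\Psi}\coloneqq\sqrt{\bm{\Phi}}\bm{\Pi}^\top\bm{\Pi}\sqrt{\bm{\Phi}}$, and then bound the resulting change in the estimator by a resolvent-type perturbation argument, using the SRHT only to control the perturbation on the low-dimensional subspace the estimator actually probes. With $\bm{A}=\bm{X}\sqrt{\bm{\Phi}}\bm{\Pi}^\top$ one has $\bm{A}\bm{A}^\top=\bm{X}\bm{\Psi}\bm{X}^\top$ and $\sqrt{\bm{\Phi}}\bm{\Pi}^\top\bm{A}^\top=\bm{\Psi}\bm{X}^\top$, so \eqref{eq:est-beta-fast} is \eqref{eq:beta-est-dual} with $\bm{\Phi}\mapsto\bm{\Psi}$ (and $\widehat{\bm{V}}$ taken equal to $\bm{V}$); applying the push-through/Woodbury identities used to pass between \eqref{eq:opt-beta} and \eqref{eq:beta-est-dual}, both estimators take the common form $\widehat{\bm{\beta}}=(\bm{I}+\bm{\Phi}\bm{C})^{-1}\bm{\Phi}\bm{d}=(\bm{\Phi}^{-1}+\bm{C})^{-1}\bm{d}$ and $\widehat{\bm{\beta}}^\prime=(\bm{I}+\bm{\Psi}\bm{C})^{-1}\bm{\Psi}\bm{d}$, where $\bm{C}\coloneqq\bm{X}^\top\bm{V}^{-1}\bm{L}\bm{X}$ and $\bm{d}\coloneqq\bm{X}^\top\bm{V}^{-1}\bm{L}\bm{y}$. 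Since $\bm{V}^{-1/2}\bm{L}\bm{V}^{1/2}$ is the orthogonal projector onto $\{\bm{V}^{-1/2}\bm{1}\}^{\perp}$, we have $\bm{0}\preceq\bm{C}\preceq\bm{X}^\top\bm{V}^{-1}\bm{X}$ (this is how $\bm{\Gamma}$ and $\lambda_{\text{min}}(\bm{X}^\top\bm{V}^{-1}\bm{X})$ enter; the intercept-free case $\bm{L}=\bm{I}$ is the clean one), and $\bm{d}=(\bm{\Phi}^{-1}+\bm{C})\widehat{\bm{\beta}}$ gives $\|\bm{d}\|\le\lambda_{\text{max}}(\bm{\Gamma})\,\|\widehat{\bm{\beta}}\|$.

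Next I would invoke the SRHT guarantee of \cite{Tropp11}. Let $\bm{X}\sqrt{\bm{\Phi}}=\bm{U}\bm{\Sigma}\bm{R}^\top$ be a thin SVD of rank $r$ (with $r=n$ when the rank is unknown, as in \cref{alg:approx}). The stated $s_\epsilon$ is exactly the size making $\bm{\Pi}$ a $(1\pm\sqrt{2/3}\,\epsilon)$-embedding of the $r$-dimensional row space $\mathrm{span}(\bm{R})$ with probability at least $1-3/n$; equivalently $\|\bm{R}^\top\bm{\Pi}^\top\bm{\Pi}\bm{R}-\bm{I}\|_2$ is small and the singular values of $\bm{\Pi}\bm{R}$ are pinched around $1$. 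We condition on this event; it is the origin of the $\sqrt{2/3}$, $1-\epsilon$ and $1+\sqrt{2/3}\,\epsilon$ factors in the bound.

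For the core estimate, subtracting the normal equations $(\bm{I}+\bm{\Phi}\bm{C})\widehat{\bm{\beta}}=\bm{\Phi}\bm{d}$ and $(\bm{I}+\bm{\Psi}\bm{C})\widehat{\bm{\beta}}^\prime=\bm{\Psi}\bm{d}$ gives $\widehat{\bm{\beta}}^\prime-\widehat{\bm{\beta}}=(\bm{I}+\bm{\Phi}\bm{C})^{-1}(\bm{\Psi}-\bm{\Phi})(\bm{I}+\bm{C}\bm{\Psi})^{-1}\bm{d}$; inserting $(\bm{I}+\bm{\Phi}\bm{C})^{-1}=\bm{\Gamma}^{-1}\bm{\Phi}^{-1}$ and $\bm{\Psi}-\bm{\Phi}=\sqrt{\bm{\Phi}}(\bm{\Pi}^\top\bm{\Pi}-\bm{I})\sqrt{\bm{\Phi}}$ turns this into $\bm{\Gamma}^{-1}\bm{\Phi}^{-1/2}(\bm{\Pi}^\top\bm{\Pi}-\bm{I})\,\bm{\Phi}^{1/2}(\bm{I}+\bm{C}\bm{\Psi})^{-1}\bm{d}$. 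The key point is that $(\bm{I}+\bm{C}\bm{\Psi})^{-1}\bm{d}\in\mathrm{range}(\bm{X}^\top)$ (it solves $\bm{v}=\bm{d}-\bm{C}\bm{\Psi}\bm{v}$, with both right-hand terms in $\mathrm{range}(\bm{X}^\top)$), hence $\bm{\Phi}^{1/2}(\bm{I}+\bm{C}\bm{\Psi})^{-1}\bm{d}\in\mathrm{span}(\bm{R})$, where the SRHT event controls $\bm{\Pi}^\top\bm{\Pi}-\bm{I}$. Taking norms — using $\|\bm{\Gamma}^{-1}\bm{\Phi}^{-1/2}\|_2\le\|\bm{\Phi}^{-1}\|_2^{1/2}/\lambda_{\text{min}}(\bm{\Gamma})$, the SRHT deviation bound, a bound on $\|\bm{\Phi}^{1/2}(\bm{I}+\bm{C}\bm{\Psi})^{-1}\bm{d}\|$ obtained by peeling off $(\bm{I}+\bm{C}\bm{\Phi})^{-1}\bm{d}=\bm{\Phi}^{-1}\widehat{\bm{\beta}}$ and controlling the remainder through the restricted smallest eigenvalue of $\bm{\Psi}$ on $\mathrm{span}(\bm{R})$ (this supplies the $\|\bm{\Phi}\|_2^{-1}/(1+\sqrt{2/3}\,\epsilon)$ summand, together with $\lambda_{\text{min}}(\bm{\Gamma})\ge\|\bm{\Phi}\|_2^{-1}+\lambda_{\text{min}}(\bm{X}^\top\bm{V}^{-1}\bm{X})$ by Weyl), and finally $\|\bm{d}\|\le\lambda_{\text{max}}(\bm{\Gamma})\|\widehat{\bm{\beta}}\|$ — and collecting $\|\bm{\Phi}^{-1}\|_2$ and $\lambda_{\text{max}}(\bm{\Gamma})/\lambda_{\text{min}}(\bm{\Gamma})=\kappa(\bm{\Gamma})$ gives the first displayed bound. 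Relaxing $\lambda_{\text{min}}(\bm{X}^\top\bm{V}^{-1}\bm{X})\ge 0$ in the denominator and using $\kappa(\bm{\Gamma})=\lambda_{\text{max}}(\bm{\Gamma})\lambda_{\text{min}}(\bm{\Gamma})^{-1}$, $\kappa(\bm{\Phi})=\|\bm{\Phi}\|_2\|\bm{\Phi}^{-1}\|_2$ yields the loose bound $\tfrac{\epsilon(1+\sqrt{2/3}\,\epsilon)}{1-\epsilon}\kappa(\bm{\Phi})\kappa(\bm{\Gamma})$, valid for all $0\le\epsilon<1$.

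The main obstacle is the perturbation step. Globally $\bm{\Psi}$ is a bad approximation to $\bm{\Phi}$ — it has rank only $s_\epsilon$ while $\bm{\Phi}$ has full rank — so $\|\bm{\Psi}-\bm{\Phi}\|_2$ is not small and a direct bound is hopeless; the argument must exploit that the estimator sees $\bm{\Psi}$ only through $\bm{X}\bm{\Psi}\bm{X}^\top$ and through $\bm{\Psi}\bm{X}^\top$ applied to vectors in $\mathbb{R}^n$, hence only through its action on $\mathrm{span}(\bm{R})$ where the SRHT bound holds. Carrying the $\sqrt{\bm{\Phi}}$-weights through so that the SRHT distortion multiplies exactly $\|\bm{\Phi}^{-1}\|_2$ and the two condition numbers (and not a factor depending on $p$), together with pinning down the restricted eigenvalue bound for $(\bm{I}+\bm{C}\bm{\Psi})^{-1}$ that produces the precise $1/(1+\sqrt{2/3}\,\epsilon)$ and $1/(1-\epsilon)$ constants, is where the real work lies.
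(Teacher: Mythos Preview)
Your approach is essentially the same as the paper's: both pass from the dual form \eqref{eq:est-beta-fast} to a primal/resolvent form in which $\widehat{\bm{\beta}}$ and $\widehat{\bm{\beta}}'$ differ only through the regularization matrix ($\bm{\Phi}$ versus $\bm{\Psi}=\sqrt{\bm{\Phi}}\bm{\Pi}^\top\bm{\Pi}\sqrt{\bm{\Phi}}$), then bound the difference by a matrix-inverse perturbation argument combined with the SRHT spectral guarantee (\cref{thm:psd-pert}) and Weyl/Horn-type eigenvalue inequalities. The paper does this more bluntly: it writes $\widehat{\bm{\beta}}'=\big(\bm{\Psi}^{-1}+\bm{X}^\top\bm{V}^{-1}\bm{X}\big)^{-1}\bm{X}^\top\bm{V}^{-1}\bm{y}$, sets $\bm{\Delta}=\bm{\Psi}^{-1}-\bm{\Phi}^{-1}=\bm{\Phi}^{-1/2}\big((\bm{\Pi}^\top\bm{\Pi})^{-1}-\bm{I}\big)\bm{\Phi}^{-1/2}$, and applies the standard bound $\|\bm{\Gamma}^{-1}-(\bm{\Gamma}+\bm{\Delta})^{-1}\|_2\le\|\bm{\Delta}\|_2\|\bm{\Gamma}^{-1}\|_2\|(\bm{\Gamma}+\bm{\Delta})^{-1}\|_2$; the $\epsilon/(1-\epsilon)$ comes from $\|(\bm{\Pi}^\top\bm{\Pi})^{-1}-\bm{I}\|_2$ and the $(1+\sqrt{2/3}\,\epsilon)^{-1}$ from $\lambda_{\min}\big((\bm{\Pi}^\top\bm{\Pi})^{-1}\big)$. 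This is formally delicate since $\bm{\Pi}^\top\bm{\Pi}$ has rank $s_\epsilon<p$ and is not invertible; the paper implicitly restricts to the row space of $\bm{X}\sqrt{\bm{\Phi}}$ without saying so. Your resolvent identity $\widehat{\bm{\beta}}'-\widehat{\bm{\beta}}=(\bm{I}+\bm{\Phi}\bm{C})^{-1}(\bm{\Psi}-\bm{\Phi})(\bm{I}+\bm{C}\bm{\Psi})^{-1}\bm{d}$ together with the observation that $(\bm{I}+\bm{C}\bm{\Psi})^{-1}\bm{d}\in\mathrm{range}(\bm{X}^\top)$ makes this restriction explicit and avoids ever inverting $\bm{\Psi}$, which is the cleaner way to handle exactly the obstacle you flag in your last paragraph. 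The paper also silently drops $\bm{L}$ in its proof (compare \eqref{eq:beta-approx-primal} with \eqref{eq:est-beta-fast}); your treatment via $\bm{C}=\bm{X}^\top\bm{V}^{-1}\bm{L}\bm{X}\preceq\bm{X}^\top\bm{V}^{-1}\bm{X}$ is again the more careful version of the same step.
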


The proofs use the following two results, i.e., \cref{lem:row-norm} and \cref{lem:extreme-singular}, of subsampled randomized Hadamard transform (SRHT).
\begin{lemma}[\cite{Tropp11}]
\label{lem:row-norm}
Suppose that $\bm{V}$ is an $n \times k$ matrix with orthonormal columns and $\bm{\Pi}$ is an $n \times n$ SRHT matrix, it satisfies
\begin{align*}
\Pr\left(\max_j \left\|\bm{e}_j^\top \bm{\Pi} \bm{V}\right\| \geq \sqrt{\frac{k}{n}} + \sqrt{\frac{8\log \left(n/\delta\right)}{n}}\right)  \leq \delta.
\end{align*}
\end{lemma}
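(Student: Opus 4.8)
The plan is to reduce the maximum over rows to a per-row bound by a union bound, and for each fixed row to recognize $\left\|\bm{e}_j^\top\bm{\Pi}\bm{V}\right\|$ as the Euclidean norm of a Rademacher sum of the rows of $\bm{V}$, which concentrates sharply by a convex Lipschitz inequality. Write the SRHT as $\bm{\Pi} = \bm{H}\bm{D}$, where $\bm{H}$ is the normalized Walsh--Hadamard matrix (orthogonal, with every entry of magnitude $1/\sqrt{n}$) and $\bm{D} = \mathrm{diag}\left(\varepsilon_1,\dots,\varepsilon_n\right)$ carries i.i.d.\ Rademacher signs. Fix a row index $j$, set $c_i \coloneqq \bm{H}_{ji}$ so that $\left|c_i\right| = 1/\sqrt{n}$, and let $\bm{v}_i^\top$ denote the $i$-th row of $\bm{V}$. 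Then $\bm{e}_j^\top\bm{\Pi}\bm{V} = \sum_{i=1}^n c_i\varepsilon_i\bm{v}_i^\top$, so $f\left(\bm{\varepsilon}\right) \coloneqq \left\|\bm{e}_j^\top\bm{\Pi}\bm{V}\right\| = \left\|\bm{V}^\top\bm{C}\bm{\varepsilon}\right\|$ with $\bm{C} = \mathrm{diag}\left(c_1,\dots,c_n\right)$.

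First I would bound the mean. Since the $\varepsilon_i$ are independent and mean zero, the cross terms vanish and $\mathbb{E}\left[f^2\right] = \sum_i c_i^2\left\|\bm{v}_i\right\|^2 = \frac{1}{n}\sum_i\left\|\bm{v}_i\right\|^2 = \frac{1}{n}\left\|\bm{V}\right\|_F^2 = \frac{k}{n}$, where the final equality uses $\left\|\bm{V}\right\|_F^2 = \tr\left(\bm{V}^\top\bm{V}\right) = k$ for orthonormal columns. Jensen's inequality then gives $\mathbb{E}\left[f\right] \le \sqrt{\mathbb{E}\left[f^2\right]} = \sqrt{k/n}$, which supplies the leading term of the bound.

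Next I would control the fluctuation of $f$ about its mean. As the composition of the linear map $\bm{\varepsilon}\mapsto\bm{V}^\top\bm{C}\bm{\varepsilon}$ with the Euclidean norm, $f$ is convex and Lipschitz with constant $L = \left\|\bm{V}^\top\bm{C}\right\|_2 \le \left\|\bm{V}^\top\right\|_2\left\|\bm{C}\right\|_2 = 1\cdot\frac{1}{\sqrt{n}}$, since $\bm{V}$ has orthonormal columns and $\bm{C}$ is diagonal with entries of magnitude $1/\sqrt{n}$. Invoking the concentration inequality for convex Lipschitz functions of independent bounded random variables (Talagrand's convex-distance inequality, or its Ledoux form), which for Rademacher arguments on $[-1,1]$ yields $\Pr\left(f \ge \mathbb{E}\left[f\right] + t\right) \le \exp\left(-t^2/\left(8L^2\right)\right)$ with $L^2 = 1/n$, and choosing $t = \sqrt{8\log\left(n/\delta\right)/n}$ makes the per-row failure probability at most $\delta/n$. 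Because $\mathbb{E}\left[f\right]\le\sqrt{k/n}$, the event $f \ge \sqrt{k/n} + t$ is contained in $f \ge \mathbb{E}\left[f\right] + t$, and hence also has probability at most $\delta/n$.

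Finally, $\max_j\left\|\bm{e}_j^\top\bm{\Pi}\bm{V}\right\|$ exceeds $\sqrt{k/n}+t$ only if some individual row does, so a union bound over the $n$ rows gives a total failure probability of at most $n\cdot\left(\delta/n\right) = \delta$, which is exactly the stated inequality with $t = \sqrt{8\log\left(n/\delta\right)/n}$. The main obstacle is pinning down the constant $8$: it forces the use of the convex Lipschitz concentration inequality rather than plain bounded differences (McDiarmid, which would discard convexity and give a worse constant), and the factor arises from the Rademacher range $[-1,1]$, for which $2\left(b-a\right)^2 = 8$, scaled by the squared Lipschitz constant $L^2$. One must also bound the Lipschitz constant through the operator norm $\left\|\bm{V}^\top\bm{C}\right\|_2$ rather than a coordinatewise estimate; a cruder bound would inflate $L$ and destroy the $\sqrt{k/n}$ scaling of the leading term.
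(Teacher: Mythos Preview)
The paper does not give a proof of this lemma; it is simply quoted from \cite{Tropp11} (Lemma~3.3 there) as a tool used in the proof of Theorem~\ref{thm:psd-pert}. Your argument is a correct reconstruction of Tropp's original proof: express each row of $\bm{\Pi}\bm{V}$ as a Rademacher sum of the rows of $\bm{V}$ scaled by the Hadamard entries, bound the mean via $\mathbb{E}\left[f^2\right]=k/n$ and Jensen, apply Ledoux's convex Lipschitz concentration with $L=1/\sqrt{n}$ to get a per-row failure probability $\delta/n$, and finish with a union bound over the $n$ rows. There is nothing to compare since the paper offers no alternative argument.
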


\begin{lemma}[\cite{Tropp11}]
\label{lem:extreme-singular}
Let $\bm{V}$  be an $n \times k$ matrix with orthonormal columns, and denote the maximum squared row norm by $\gamma = \max_j \left\|\bm{e}_j^\top \bm{V}\right\|^2$. Sample uniformly without replacement $m$ rows of $\bm{V}$ to obtain a reduced matrix $\bm{V}^\prime$. For any $t > 0$, the extreme singular values satisfy
\begin{align*}
\sigma_1\left(\bm{V}^\prime\right) \leq \sqrt{\frac{\left(1+\alpha\right)m}{n}} \; \text{and} \;
\sigma_k\left(\bm{V}^\prime\right) \geq \sqrt{\frac{\left(1-\beta\right)m}{n}}
\end{align*}
with failure probability at most
\begin{align*}
k \left[\frac{e^\alpha}{\left(1+\alpha\right)^{1+\alpha}}\right]^{\frac{m}{n\gamma}} +
k \left[\frac{e^{-\beta}}{\left(1-\beta\right)^{1-\beta}}\right]^{\frac{m}{n\gamma}}.
\end{align*}
\end{lemma}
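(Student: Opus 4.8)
The plan is to read this as a matrix Chernoff bound for the empirical second-moment matrix of rows sampled without replacement, and to reduce both singular-value claims to one-sided eigenvalue tail bounds on the Gram matrix $\bm{V}'^\top\bm{V}'$. (The free quantifier should read $\alpha \ge 0$ and $0 \le \beta < 1$; the ``$t>0$'' appears to be a typo, since $t$ does not occur in the conclusion.)

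First I would pass from singular values to eigenvalues. Since $\sigma_1(\bm{V}')^2 = \lambda_{\max}(\bm{V}'^\top\bm{V}')$ and $\sigma_k(\bm{V}')^2 = \lambda_{\min}(\bm{V}'^\top\bm{V}')$, it suffices to control the extreme eigenvalues of
\[
\bm{Y} \coloneqq \bm{V}'^\top\bm{V}' = \sum_{i=1}^m \bm{v}_{J_i}\bm{v}_{J_i}^\top,
\]
where $J_1,\dots,J_m$ is a uniform sample without replacement from $\{1,\dots,n\}$ and $\bm{v}_j^\top \coloneqq \bm{e}_j^\top\bm{V}$ denotes the $j$-th row of $\bm{V}$. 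Each summand is rank-one and positive semidefinite with $\lambda_{\max}(\bm{v}_{J_i}\bm{v}_{J_i}^\top) = \left\|\bm{v}_{J_i}\right\|^2 \le \gamma$, so the per-term bound is $R = \gamma$. Next I would compute the mean: a single uniform draw assigns probability $1/n$ to each index, so $\mathbb{E}[\bm{v}_{J_i}\bm{v}_{J_i}^\top] = \frac1n\sum_{j=1}^n\bm{v}_j\bm{v}_j^\top = \frac1n\bm{V}^\top\bm{V} = \frac1n\bm{I}_k$ by orthonormality of the columns, whence $\mathbb{E}[\bm{Y}] = \frac{m}{n}\bm{I}_k$ and $\mu_{\max} = \mu_{\min} = m/n$.

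With these identifications the events to be bounded are precisely the multiplicative deviations $\lambda_{\max}(\bm{Y}) \ge (1+\alpha)\mu_{\max}$ and $\lambda_{\min}(\bm{Y}) \le (1-\beta)\mu_{\min}$. Plugging $\mu_{\max}/R = \mu_{\min}/R = m/(n\gamma)$ into the two-sided matrix Chernoff inequality gives failure probabilities $k\,[e^\alpha/(1+\alpha)^{1+\alpha}]^{m/(n\gamma)}$ and $k\,[e^{-\beta}/(1-\beta)^{1-\beta}]^{m/(n\gamma)}$; a union bound over the two tails, followed by rewriting $\lambda_{\max}(\bm{Y}) \le (1+\alpha)m/n$ and $\lambda_{\min}(\bm{Y}) \ge (1-\beta)m/n$ back into singular values, yields exactly the stated bounds.

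The main obstacle is that the summands are \emph{not} independent, so the off-the-shelf matrix Chernoff bound for independent sums does not apply directly. The device is the matrix Laplace-transform method, $\Pr[\lambda_{\max}(\bm{Y}) \ge t] \le \inf_{\theta>0} e^{-\theta t}\,\mathbb{E}\,\tr\exp(\theta\bm{Y})$, with the analogous expression for the lower tail. To estimate $\mathbb{E}\,\tr\exp(\theta\bm{Y})$ despite the dependence, I would invoke the reduction of Gross and Nesme, which shows that for a convex trace functional the expectation under sampling without replacement is dominated by the expectation under sampling \emph{with} replacement. Combined with Lieb's concavity theorem---the concavity of $\bm{A}\mapsto\tr\exp(\bm{H}+\log\bm{A})$, which supplies the Jensen step that decouples the terms---this collapses the estimate to the independent case, where $\mathbb{E}\,\tr\exp(\theta\bm{Y}) \le k\exp\bigl((e^{\theta R}-1)\,\mu/R\bigr)$ with $\mu = m/n$. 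Optimizing over $\theta$ produces precisely the Chernoff exponents above, and the remaining manipulations are routine bookkeeping.
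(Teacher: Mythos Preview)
The paper does not prove this lemma at all; it is quoted verbatim from \cite{Tropp11} and used as a black box in the proof of Theorem~\ref{thm:psd-pert}. Your sketch is a faithful outline of Tropp's own argument: the reduction to extreme eigenvalues of the Gram matrix $\sum_i \bm{v}_{J_i}\bm{v}_{J_i}^\top$, the identification $R=\gamma$ and $\mu=m/n$, the matrix Laplace transform combined with Lieb's concavity, and the Gross--Nesme observation that sampling without replacement is dominated by sampling with replacement for the relevant convex trace functional. You are also right that the stray ``for any $t>0$'' is a typo for the actual free parameters $\alpha\ge 0$, $0\le\beta<1$.
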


The following theorem is a consequence of \cref{lem:row-norm} and \cref{lem:extreme-singular}. The theorem basically states that kernel approximation in \cref{alg:approx} is close to the true kernel up to some scaling factor $1 \pm \epsilon$.

\begin{theorem} [Approximate matrix multiplication]
\label{thm:psd-pert}
Let $\bm{A}$ be an $n\times p$ matrix with rank $r$. Let $\bm{\Pi}$ be an $m \times p$ SRHT matrix with
\begin{align*}
m \geq \frac{6 \left[\sqrt{r} + \sqrt{8 \log \left(r p\right)}\right]^2 \log r}{\epsilon^2}.
\end{align*}
Suppose that $p > m$ and compute $\widehat{\bm{A}} = \bm{A}\bm{\Pi}^\top$, then the inequality 
\begin{align*}
\left(1 - \epsilon\right) \bm{A}\bm{A}^\top \preceq \widehat{\bm{A}}\widehat{\bm{A}}^\top \preceq \left(1 + \sqrt{\frac{2}{3}}\epsilon\right) \bm{A}\bm{A}^\top.
\end{align*}
fails with probability at most 3/n.
\end{theorem}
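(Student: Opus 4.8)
The plan is to reduce the problem to the singular value bounds of Lemma \ref{lem:extreme-singular} applied to the right singular vectors of $\bm{A}$, with the row-norm control coming from Lemma \ref{lem:row-norm}. First I would write the thin SVD $\bm{A} = \bm{U}\bm{\Sigma}\bm{V}^\top$, where $\bm{V} \in \mathbb{R}^{p \times r}$ has orthonormal columns. Then $\widehat{\bm{A}}\widehat{\bm{A}}^\top = \bm{A}\bm{\Pi}^\top\bm{\Pi}\bm{A}^\top = \bm{U}\bm{\Sigma}(\bm{V}^\top\bm{\Pi}^\top\bm{\Pi}\bm{V})\bm{\Sigma}\bm{U}^\top$, so it suffices to show that $\bm{V}^\top\bm{\Pi}^\top\bm{\Pi}\bm{V}$ is close to $\bm{I}_r$; specifically, the two-sided PSD bound on $\widehat{\bm{A}}\widehat{\bm{A}}^\top$ versus $\bm{A}\bm{A}^\top$ follows by sandwiching $\bm{V}^\top\bm{\Pi}^\top\bm{\Pi}\bm{V}$ between $(1-\epsilon)\bm{I}_r$ and $(1+\sqrt{2/3}\,\epsilon)\bm{I}_r$ and conjugating by $\bm{U}\bm{\Sigma}$. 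Equivalently, I need to control $\sigma_{\min}(\bm{\Pi}\bm{V})^2$ and $\sigma_{\max}(\bm{\Pi}\bm{V})^2$.

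The matrix $\bm{\Pi}\bm{V}$ is exactly the object handled by the SRHT machinery: $\bm{\Pi} = \sqrt{p/m}\,\bm{R}\bm{H}\bm{D}$ where $\bm{H}\bm{D}$ is orthogonal (up to normalization) and $\bm{R}$ samples $m$ rows uniformly without replacement. So $\bm{H}\bm{D}\bm{V}$ has orthonormal columns, and by Lemma \ref{lem:row-norm} (with $n \to p$, $k \to r$, and a suitable choice of $\delta$), its maximum squared row norm $\gamma$ is bounded by $\left(\sqrt{r/p} + \sqrt{8\log(rp/\delta)/p}\right)^2$ except with probability $\delta$. Taking $\delta = 1/n$ gives $p\gamma \le \big(\sqrt{r} + \sqrt{8\log(rpn)}\big)^2$ roughly; I would absorb the mild $\log n$ factor into the constants so that it matches the stated $\log(rp)$ form (or note that the theorem as stated in \cref{alg:approx} uses exactly this $\gamma$ bound). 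Then Lemma \ref{lem:extreme-singular}, applied to the $p \times r$ matrix $\bm{H}\bm{D}\bm{V}$ with this $\gamma$, gives $\sigma_{\max}(\bm{R}\bm{H}\bm{D}\bm{V})^2 \le (1+\alpha)m/p$ and $\sigma_{\min}(\bm{R}\bm{H}\bm{D}\bm{V})^2 \ge (1-\beta)m/p$, hence $\sigma_{\max}(\bm{\Pi}\bm{V})^2 \le 1+\alpha$ and $\sigma_{\min}(\bm{\Pi}\bm{V})^2 \ge 1-\beta$ after the $p/m$ rescaling. It remains to choose $\alpha,\beta$ so that the failure probability $r[e^\alpha/(1+\alpha)^{1+\alpha}]^{m/(p\gamma)} + r[e^{-\beta}/(1-\beta)^{1-\beta}]^{m/(p\gamma)}$ is at most $2/n$.

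The main obstacle — and the part requiring care rather than cleverness — is the bookkeeping that turns the exponent $m/(p\gamma)$ and the sample-size lower bound $m \ge 6[\sqrt{r}+\sqrt{8\log(rp)}]^2\log r/\epsilon^2$ into the clean constants $(1-\epsilon)$ and $(1+\sqrt{2/3}\,\epsilon)$. Since $p\gamma \le [\sqrt{r}+\sqrt{8\log(rp)}]^2$, the exponent satisfies $m/(p\gamma) \ge 6\log r/\epsilon^2$. I would set $\beta = \epsilon$ and $\alpha = \sqrt{2/3}\,\epsilon$, then use the standard Chernoff-type bounds $e^{-\beta}/(1-\beta)^{1-\beta} \le e^{-\beta^2/2}$ and $e^\alpha/(1+\alpha)^{1+\alpha} \le e^{-\alpha^2/(2+2\alpha/3)} \le e^{-\alpha^2/3}$ (valid for $0 < \alpha \le 1$, which is the regime $\epsilon < 1$ up to the $\sqrt{2/3}$ factor). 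Plugging in: the lower-tail failure probability is at most $r \cdot e^{-(\epsilon^2/2)(6\log r/\epsilon^2)} = r \cdot r^{-3} = r^{-2} \le 1/n$ since $r \le n$; the upper-tail failure probability is at most $r \cdot e^{-((2/3)\epsilon^2/3)(6\log r/\epsilon^2)} = r \cdot r^{-4/3} \le 1/n$ as well. Together with the $1/n$ from the row-norm event, a union bound gives total failure probability at most $3/n$, and on the complementary event the sandwich $(1-\epsilon)\bm{I}_r \preceq \bm{V}^\top\bm{\Pi}^\top\bm{\Pi}\bm{V} \preceq (1+\sqrt{2/3}\,\epsilon)\bm{I}_r$ holds; conjugating by $\bm{U}\bm{\Sigma}$ completes the proof. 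The one genuinely fiddly point is matching the $\log(rp)$ inside $s_\epsilon$ against the $\log(rp/\delta) = \log(rpn)$ that the $\delta=1/n$ choice produces in Lemma \ref{lem:row-norm}; I would either state the bound with $\log(rpn)$ and note it is dominated by adjusting the universal constant, or invoke that in the regime of interest $n$ is polynomial in $p$ so $\log(rpn) = O(\log(rp))$.
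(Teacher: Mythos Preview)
Your overall strategy---reduce via the thin SVD to controlling $\bm{V}^\top\bm{\Pi}^\top\bm{\Pi}\bm{V}$, invoke \cref{lem:row-norm} to bound the row norms of the rotated $\bm{V}$, then invoke \cref{lem:extreme-singular} with the Chernoff simplifications $e^\alpha/(1+\alpha)^{1+\alpha}\le e^{-\alpha^2/3}$ and $e^{-\beta}/(1-\beta)^{1-\beta}\le e^{-\beta^2/2}$---is exactly the paper's approach.

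The gap is in your tail-probability bookkeeping. With $\alpha=\sqrt{2/3}\,\epsilon$ and $m/(p\gamma)\ge 6\log r/\epsilon^2$ your upper-tail bound is
\[
r\exp\!\Big(-\tfrac{\alpha^2}{3}\cdot\tfrac{m}{p\gamma}\Big)\le r\exp\!\Big(-\tfrac{2\epsilon^2/3}{3}\cdot\tfrac{6\log r}{\epsilon^2}\Big)=r\cdot r^{-4/3}=r^{-1/3},
\]
and $r^{-1/3}\le 1/n$ would require $r\ge n^3$, which is impossible since $r\le n$. So the claim ``upper-tail failure $\le 1/n$'' is simply false, and the union bound does not close.

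The paper avoids both this and your ``fiddly $\log$'' issue in one stroke: it targets failure $r^{-1}$ (not $n^{-1}$) for each of the three events. Taking $\delta=r^{-1}$ in \cref{lem:row-norm} produces exactly the $\log(rp)$ appearing in the sample-size bound, with no extra $\log n$ to absorb. Then requiring each tail in \cref{lem:extreme-singular} to be at most $r^{-1}$ forces $\alpha\ge\sqrt{6p\gamma\log r/m}$ and $\beta\ge\sqrt{4p\gamma\log r/m}$, and plugging in the bound on $m$ gives $\alpha=\epsilon$ and $\beta=\sqrt{2/3}\,\epsilon$. Note that this yields the sandwich $(1-\sqrt{2/3}\,\epsilon)\bm{A}\bm{A}^\top\preceq\widehat{\bm{A}}\widehat{\bm{A}}^\top\preceq(1+\epsilon)\bm{A}\bm{A}^\top$, i.e.\ the constants \emph{swapped} relative to the theorem as stated; your attempt to force $\alpha=\sqrt{2/3}\,\epsilon$ to match the printed upper constant is precisely what broke the probability calculation. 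The total failure probability is then $3/r$, which equals $3/n$ in the default regime $r=n$ used in \cref{alg:approx}.
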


\begin{proof}
Note that the failure probability in \cref{lem:extreme-singular} is no more than
\begin{align*}
k \exp\left(-\frac{\alpha^2 m}{3 n \gamma}\right) + k \exp\left(-\frac{\beta^2 m}{2 n \gamma}\right).
\end{align*}
To make the failure probability no more than $2 k^{-1}$, it suffices to set
\begin{align*}
\alpha \geq \sqrt{\frac{6 n \gamma \log k}{m}} \qquad \text{and} \qquad
\beta \geq \sqrt{\frac{4 n \gamma \log k}{m}}.
\end{align*}
Incorporating the scaling factors of the SRHT, the extreme singular values of the transformed $\bm{V}$ satisfy
\begin{align*}
\sigma_1\left(\sqrt{\frac{n}{m}}\bm{V}^\prime\right) \leq \sqrt{1 + \sqrt{\frac{6 n \gamma \log k}{m}}} \qquad \text{and} \qquad
\sigma_k\left(\sqrt{\frac{n}{m}}\bm{V}^\prime\right) \geq \sqrt{1 - \sqrt{\frac{4 n \gamma \log k}{m}}}.
\end{align*}
From \cref{lem:row-norm}, with failure probability at most $k^{-1}$ that
\begin{align*}
\gamma \leq \left[ \sqrt{\frac{k}{n}} + \sqrt{\frac{8\log \left(n k\right)}{n}} \right]^2.
\end{align*}
Combined with the singular value bounds, this result establishes the connection between $m$ and the desired singular value bounds. One may choose $m = t n \gamma \log k$, for some $t \geq 4$.

The rest of the proof is straightforward. We consider the singular value decomposition $\bm{A} = \bm{U}\bm{\Sigma}\bm{V}^\top$, where $\bm{V}$ is $p \times r$ orthonormal $\bm{V}$ and has orthonormal columns. Let $\bm{\Pi}$ be the SRHT, we have that $\widehat{\bm{A}}\widehat{\bm{A}}^\top = \bm{U}\bm{\Sigma} \left(\bm{V}^\top \bm{\Pi}^\top \bm{\Pi}\bm{V}\right) \bm{\Sigma}\bm{U}^\top$. The desired result follows by invoking \cref{lem:extreme-singular} to bound the extreme singular values of $\bm{V}^\top \bm{\Pi}^\top \bm{\Pi}\bm{V}$.
\end{proof}

\begin{proof}[Proof of \cref{thm:beta-approx}]
The idea is to simplify the analysis by dealing with the equivalent primal form of \eqref{eq:est-beta-fast}, involving only one $\bm{\Pi}$ term. We then perform a perturbation analysis of the inverse component. In addition, Weyl's inequalities as well the exponentiated version of Horn's inequalities are used for eigenvalue manipulations.

First, \eqref{eq:est-beta-fast} can be equivalently expressed in the primal form \eqref{eq:beta-primal}:
\begin{align}
\label{eq:beta-approx-primal}
\begin{split}
\widehat{\bm{\beta}}^\prime 
&= \sqrt{\bm{\Phi}}\bm{\Pi}^\top\bm{\Pi}\sqrt{\bm{\Phi}}\bm{X}^\top\widehat{\bm{V}}^{-1}
 \left(\bm{I} + \bm{X}\sqrt{\bm{\Phi}}\bm{\Pi}^\top\bm{\Pi}\sqrt{\bm{\Phi}}\bm{X}^\top\widehat{\bm{V}}^{-1}\right)^{-1}\bm{y}\\
&= \left[\left(\sqrt{\bm{\Phi}}\bm{\Pi}^\top\bm{\Pi}\sqrt{\bm{\Phi}}\right)^{-1} + \bm{X}^\top\widehat{\bm{V}}^{-1}\bm{X}\right]^{-1} \bm{X}^\top\widehat{\bm{V}}^{-1}\bm{y}.
\end{split}
\end{align}
Let $\bm{\Gamma} = \bm{\Phi}^{-1} + \bm{X}^\top\widehat{\bm{V}}^{-1}\bm{X}$, the idea is to bound the error norm using the perturbation of the singular values of $\bm{\Gamma}^{-1}$. Denote by $\bm{\Phi}^\prime = \sqrt{\bm{\Phi}}\bm{\Pi}^\top\bm{\Pi}\sqrt{\bm{\Phi}}$ and $\bm{\Delta} = \bm{\Phi}^{\prime -1} - \bm{\Phi}^{-1}$, a basic result from matrix perturbation theory \cite{Stewart90} gives
\begin{align*}
\left\|\bm{\Gamma}^{-1} - \left(\bm{\Gamma} + \bm{\Delta}\right)^{-1}\right\|_2
&\leq \left\|\bm{\Delta}\right\|_2 \left\|\bm{\Gamma}^{-1}\right\|_2 \left\|\left(\bm{\Gamma} + \bm{\Delta}\right)^{-1}\right\|_2.
\end{align*}
From Weyl's inequalities, one further obtains
\begin{align*}
\left\|\left(\bm{\Gamma} + \bm{\Delta}\right)^{-1}\right\|_2
\leq \left[\lambda_{\text{min}}\left(\bm{\Phi}^{\prime-1}\right) + \lambda_{\text{min}}\left(\bm{X}^\top\widehat{\bm{V}}^{-1}\bm{X}\right) \right]^{-1}.
\end{align*}

We now provide a bound for $\left\|\bm{\Delta}\right\|_2$. Observe that $\bm{\Delta} = \sqrt{\bm{\Phi}}^{-1} \left(\left(\bm{\Pi}^\top\bm{\Pi}\right)^{-1} - \bm{I}\right) \sqrt{\bm{\Phi}}^{-1}$ in which the extreme singular values of the parenthesized difference are bounded via \cref{thm:psd-pert}. Thus, we have
\begin{align*}
\left\|\bm{\Delta}\right\|_2 &\leq \max\left\{\frac{\epsilon}{1 - \epsilon}, \frac{\sqrt{2/3} \epsilon}{1 + \sqrt{2/3}\epsilon} \right\}\left\|\bm{\Phi}^{-1}\right\|_2
= \frac{\epsilon}{1 - \epsilon}\left\|\bm{\Phi}^{-1}\right\|_2.
\end{align*}
It remains to give a lower bound for $\lambda_{\text{min}} \left(\bm{\Phi}^{\prime-1}\right)$. From \cref{thm:psd-pert} and Horn's inequalities, one has
\begin{align*}
\lambda_{\text{min}} \left(\bm{\Phi}^{\prime-1}\right) 
\geq \lambda_{\text{min}}\left(\left(\bm{\Pi}^\top\bm{\Pi}\right)^{-1}\right)
\lambda_{\text{min}}\left(\bm{\Phi}^{-1/2}\right)^2
= \frac{\left\|\bm{\Phi}\right\|_2^{-1}}{1 + \sqrt{2/3}\epsilon}.
\end{align*}
Finally, the desired estimation bound satisfies
\begin{align*}
\frac{\left\|\widehat{\bm{\beta}} - \widehat{\bm{\beta}}^\prime\right\|}{\left\|\widehat{\bm{\beta}}\right\|}
\leq \frac{\left\|\bm{\Gamma}^{-1} - \left(\bm{\Gamma} + \bm{\Delta}\right)^{-1}\right\|_2}{\sigma_{\text{min}}\left(\bm{\Gamma}^{-1}\right)}
\leq \frac{\epsilon}{1 - \epsilon}
\frac{\left\|\bm{\Phi}^{-1}\right\|_2 \kappa\left(\bm{\Gamma}\right) }{\frac{\left\|\bm{\Phi}\right\|_2^{-1}}{1 + \sqrt{2/3}\epsilon} + \lambda_{\text{min}}\left(\bm{X}^\top\widehat{\bm{V}}^{-1}\bm{X}\right)}.
\end{align*}
Setting $\lambda_{\text{min}}\left(\bm{X}^\top\widehat{\bm{V}}^{-1}\bm{X}\right) = 0$ yields the simplified worst-case bound
\begin{align*}
\left\|\widehat{\bm{\beta}} - \widehat{\bm{\beta}}^\prime\right\| \leq \frac{\epsilon\left(1 + \sqrt{2/3}\epsilon\right)}{1 - \epsilon} \kappa\left(\bm{\Phi}\right) \kappa\left(\bm{\Gamma}\right) \left\|\widehat{\bm{\beta}}\right\|.
\end{align*}
\end{proof}

An intuitive interpretation of the theorem is that the fixed-effects coefficients estimator \eqref{eq:est-beta-fast} has better accuracy when the predefined $\bm{\Phi}$ is better conditioned and has smaller spectral norm. One can certainly improve the accuracy by setting a smaller $\epsilon$, which in turn uses more observations in \cref{alg:approx}.

\begin{theorem} [AVC approximation errors]
\label{thm:var-approx}
Let $\sigma_{\text{AVC}}^2$ and $\widehat{\bm{\Lambda}}_{\text{AVC}}$ be computed using \eqref{eq:ivc-est}. Let $\widehat{\sigma}_{\text{AVC}}^{\prime 2}$ and $\widehat{\bm{\Lambda}}_{\text{AVC}}^\prime$ be computed using the same equations but with the approximate kernel from \cref{alg:approx}. Then, the following two statements hold jointly with probability at least $1 - 3/n$:
\begin{align*}
\left|\widehat{\sigma}_{\text{AVC}}^2 - \widehat{\sigma}_{\text{AVC}}^{\prime 2}\right| &\leq  \epsilon \cdot \frac{\vertiii{\bm{X}\bm{\Phi}\bm{X}^\top}_{n-q}}{n - q} \quad \text{and}\\
\left\|\widehat{\bm{\Lambda}}_{\text{AVC}} - \widehat{\bm{\Lambda}}_{\text{AVC}}^\prime\right\|_2
&\leq \frac{\epsilon}{\widehat{\sigma}_{\text{min}}\left(\bm{Z}\right)^2} \left(\left\| \bm{X}\bm{\Phi}\bm{X}^\top \right\|_2 + \frac{\vertiii{\bm{X}\bm{\Phi}\bm{X}^\top}_{n-q}}{n - q}\right).
\end{align*}
\end{theorem}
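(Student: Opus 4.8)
The plan is to reduce both statements to a single perturbation matrix and then read the bounds off \cref{thm:psd-pert}. Write $\bm{S}$ for the matrix appearing in \eqref{eq:ivc-est} and $\bm{S}^\prime$ for the same expression with $\bm{X}\bm{\Phi}\bm{X}^\top$ replaced by the approximate kernel $\bm{A}\bm{A}^\top$ from \cref{alg:approx}. Since $\widehat{c}$ and the rank-one response term $(\bm{y}-\widehat{c}\bm{1})(\bm{y}-\widehat{c}\bm{1})^\top$ do not depend on the kernel, defining $\bm{E} \coloneqq \bm{A}\bm{A}^\top - \bm{X}\bm{\Phi}\bm{X}^\top$ we have $\bm{S} - \bm{S}^\prime = \bm{E}$. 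Applying \cref{thm:psd-pert} with $\bm{X}\sqrt{\bm{\Phi}}$ (of rank $r \le n$) in the role of its ``$\bm{A}$'' and $\bm{A} = \bm{X}\sqrt{\bm{\Phi}}\bm{\Pi}^\top$ in the role of ``$\widehat{\bm{A}}$'', on an event of probability at least $1 - 3/n$ we obtain the two-sided operator inequality $-\epsilon\bm{B} \preceq \bm{E} \preceq \sqrt{2/3}\,\epsilon\,\bm{B}$, where $\bm{B} \coloneqq \bm{X}\bm{\Phi}\bm{X}^\top \succeq \bm{0}$. Everything afterward is deterministic on this event, which is precisely why the two bounds hold jointly with the stated probability.

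From the sandwich I would extract two facts. First, since $\bm{B} \succeq \bm{0}$ one has $\lambda_{\text{max}}(\bm{E}) \le \sqrt{2/3}\,\epsilon\,\lambda_{\text{max}}(\bm{B})$ and $-\lambda_{\text{min}}(\bm{E}) \le \epsilon\,\lambda_{\text{max}}(\bm{B})$, hence $\|\bm{E}\|_2 \le \epsilon\,\|\bm{B}\|_2$. Second, for any orthogonal projection $\bm{P}$ of rank $k$, testing the sandwich against $\bm{P}$ in the trace sense (using $\tr(\bm{P}\bm{C}) \ge 0$ whenever $\bm{P},\bm{C} \succeq \bm{0}$) gives $-\epsilon\,\tr(\bm{P}\bm{B}) \le \tr(\bm{P}\bm{E}) \le \sqrt{2/3}\,\epsilon\,\tr(\bm{P}\bm{B})$, and since $\tr(\bm{P}\bm{B}) \le \vertiii{\bm{B}}_k$ for $\bm{B}\succeq\bm{0}$ (Ky Fan's maximum principle, as $\lambda_i(\bm{B})=\sigma_i(\bm{B})$ here), we conclude $|\tr(\bm{P}\bm{E})| \le \epsilon\,\vertiii{\bm{B}}_k$.

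For the noise-variance bound, observe that $\bm{P}_0 \coloneqq \bm{I} - \bm{Z}\bm{Z}^\dagger$ is the orthogonal projection onto the orthogonal complement of the column space of $\bm{Z}$, hence has rank $n - q$; from \eqref{eq:ivc-est}, $\widehat{\sigma}_{\text{AVC}}^2 - \widehat{\sigma}_{\text{AVC}}^{\prime 2} = \tr(\bm{P}_0\bm{E})/(n-q)$, and the second fact with $k = n - q$ gives the first claim. For $\widehat{\bm{\Lambda}}_{\text{AVC}}$, subtracting the two instances of the second equation in \eqref{eq:ivc-est} yields
\[
\widehat{\bm{\Lambda}}_{\text{AVC}} - \widehat{\bm{\Lambda}}_{\text{AVC}}^\prime = \bm{Z}^\dagger \bm{E}\, \bm{Z}^{\dagger\top} - \left(\widehat{\sigma}_{\text{AVC}}^2 - \widehat{\sigma}_{\text{AVC}}^{\prime 2}\right)\left(\bm{Z}^\top\bm{Z}\right)^{-1},
\]
and a triangle inequality, together with $\|\bm{Z}^\dagger\bm{E}\bm{Z}^{\dagger\top}\|_2 \le \|\bm{Z}^\dagger\|_2^2\,\|\bm{E}\|_2 = \|\bm{E}\|_2/\sigma_{\text{min}}(\bm{Z})^2$, $\|(\bm{Z}^\top\bm{Z})^{-1}\|_2 = 1/\sigma_{\text{min}}(\bm{Z})^2$, the first fact $\|\bm{E}\|_2 \le \epsilon\|\bm{B}\|_2$, and the already-established bound on $|\widehat{\sigma}_{\text{AVC}}^2 - \widehat{\sigma}_{\text{AVC}}^{\prime 2}|$, gives the second claim after collecting terms.

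The only step that genuinely needs care is obtaining the Ky Fan $(n-q)$-norm $\vertiii{\cdot}_{n-q}$ rather than a cruder $(n-q)\|\cdot\|_2$: this is exactly where one must exploit that $\bm{I} - \bm{Z}\bm{Z}^\dagger$ is a projection of rank precisely $n-q$ and the trace inequality $\tr(\bm{P}\bm{B}) \le \vertiii{\bm{B}}_{\operatorname{rank}\bm{P}}$ for $\bm{B}\succeq\bm{0}$. A minor bookkeeping point is that the leading constant is $\epsilon$ rather than $\sqrt{2/3}\,\epsilon$, which is fine since $\max\{\epsilon,\sqrt{2/3}\,\epsilon\} = \epsilon$ for $\epsilon \ge 0$.
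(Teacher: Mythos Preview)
Your proof is correct and follows essentially the same route as the paper: both reduce to the perturbation $\bm{E}=\bm{A}\bm{A}^\top-\bm{X}\bm{\Phi}\bm{X}^\top$, invoke \cref{thm:psd-pert} once to get the two-sided operator sandwich, use the rank-$(n-q)$ projection $\bm{I}-\bm{Z}\bm{Z}^\dagger$ together with a Ky Fan/Horn trace inequality for the $\widehat{\sigma}_{\text{AVC}}^2$ bound, and then a triangle inequality with $\|\bm{Z}^\dagger\|_2=\sigma_{\text{min}}(\bm{Z})^{-1}$ for the $\widehat{\bm{\Lambda}}_{\text{AVC}}$ bound. The only cosmetic difference is that the paper symmetrizes to $\tr[(\bm{I}-\bm{P})\bm{E}(\bm{I}-\bm{P})]$ before appealing to Horn, whereas you work directly with $\tr(\bm{P}_0\bm{E})$ and Ky Fan's maximum principle; these are equivalent since $\bm{I}-\bm{P}$ is idempotent.
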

\begin{proof}
Let $\bm{P} = \bm{Z}\bm{Z}^\dagger$, then $\bm{I} - \bm{P}$ is idempotent. Thus, the noise AVC $\widehat{\sigma}_{\text{AVC}}^2$ in \eqref{eq:ivc-est} can be expressed as
\begin{align*}
\widehat{\sigma}_{\text{AVC}}^2 = \frac{\tr \left[\left(\bm{I} - \bm{P}\right) \bm{S} \left(\bm{I} - \bm{P}\right)\right]}{n - q}.
\end{align*}
The SRHT version $\widehat{\sigma}_{\text{AVC}}^{\prime 2}$ using \cref{alg:approx} satisfies
\begin{align*}
\left|\widehat{\sigma}_{\text{AVC}}^2 - \widehat{\sigma}_{\text{AVC}}^{\prime 2}\right|
= \left| \frac{\tr\left[ \left(\bm{I} - \bm{P}\right) \left(\bm{X}\bm{\Phi}\bm{X}^\top - \bm{A}\bm{A}^\top\right) \left(\bm{I} - \bm{P}\right) \right]} {n - q} \right|,
\end{align*}
where $\bm{A}$ is given in \cref{alg:approx}. One then invokes \cref{thm:psd-pert} to bound the singular values of $\bm{X}\bm{\Phi}\bm{X}^\top - \bm{A}\bm{A}^\top$:
\begin{align*}
\left|\widehat{\sigma}_{\text{AVC}}^2 - \widehat{\sigma}_{\text{AVC}}^{\prime 2}\right| 
\leq \epsilon \cdot \frac{\tr\left[ \left(\bm{I} - \bm{P}\right) \bm{X}\bm{\Phi}\bm{X}^\top \left(\bm{I} - \bm{P}\right) \right]} {n - q}
\leq \frac{\epsilon \sum_{i=1}^{n-q} \lambda_i\left(\bm{X}\bm{\Phi}\bm{X}^\top\right)}{n - q}
\end{align*}
fails with probability at most $3/n$. The second line follows from the exponentiated Horn's inequalities and the fact that $\bm{I} - \bm{P}$ is an idempotent projection matrix of rank $n-q$. The sum in the fraction equals to the Ky Fan $\left(n-q\right)$-norm of $\bm{X}\bm{\Phi}\bm{X}^\top$.

To prove the bound for $\widehat{\bm{\Lambda}}_{\text{AVC}}$, it follows from \eqref{eq:k-est} that
\begin{align*}
\left\|\widehat{\bm{\Lambda}}_{\text{AVC}} - \widehat{\bm{\Lambda}}_{\text{AVC}}^\prime\right\|_2 &\leq \left\|\bm{Z}^\dagger \left(\bm{X}\bm{\Phi}\bm{X}^\top - \bm{A}\bm{A}^\top \right) \bm{Z}^{\dagger\top} \right\|_2
+ \left|\widehat{\sigma}_{\text{AVC}}^2 - \widehat{\sigma}_{\text{AVC}}^{\prime 2}\right| \left\|\left(\bm{Z}^\top \bm{Z}\right)^{-1}\right\|_2\\
&\leq \frac{\epsilon}{\sigma_{\text{min}}\left(\bm{Z}\right)^2} \left(\left\| \bm{X}\bm{\Phi}\bm{X}^\top \right\|_2 + \frac{\vertiii{\bm{X}\bm{\Phi}\bm{X}^\top}_{n-q}}{n - q}\right),
\end{align*}
where we used \cref{thm:psd-pert} and the earlier bound on $\widehat{\sigma}_{\text{AVC}}^2$.
\end{proof}

Note that the fraction of the Ky Fan norm does not exceed the spectral norm. Looser but more convenient bounds are 
\begin{align*}
\left|\widehat{\sigma}_{\text{AVC}}^2 - \widehat{\sigma}_{\text{AVC}}^{\prime 2}\right| &\leq \epsilon \left\|\bm{X}\bm{\Phi}\bm{X}^\top\right\|_2 \\
\left\|\widehat{\bm{\Lambda}}_{\text{AVC}} - \widehat{\bm{\Lambda}}_{\text{AVC}}^\prime\right\|_2 &\leq 2 \epsilon \sigma_{\text{min}}\left(\bm{Z}\right)^{-2} \left\|\bm{X}\bm{\Phi}\bm{X}^\top\right\|_2.
\end{align*}

\section{Experiments}
\label{sec:experiments}
In this section, we conduct a simulation study as well as numerical experiments on real data. The simulation study demonstrates the accuracy of parameter estimation and the decreased runtime using the proposed Approximate Ridge LMM ({\tt arLMM}) methods. We also examined the results on real data from the Wellcome Trust Case Control Consortium (WTCCC) study \cite{WTCCC}, which include about 14,000 cases from seven common diseases and a total of about 450,000 SNPs.

The main finding of the experiments is that the proposed approximate inference algorithms enjoy similar predictive accuracy as state-of-the-art methods at a significantly reduced computation cost in practice. In particular, our Matlab prototype implementation is 6x faster than the optimized C implementation of the state-of-the-art {\tt BSLMM} method for genome-wide association studies.

\subsection{Simulation studies}
To evaluate parameter estimation, we consider two performance metrics. The first one is the correlation between the estimated and ground-truth fixed-effects coefficients. The second metric is the Negative Log Likelihood (NLL) of the standard LMM, which meaningfully reflects the quality of variance estimation.

For the simulation, we compare the performance of our non-iterative algorithm {\tt arLMM-AVC} based on \eqref{eq:est-beta-fast} and \eqref{eq:ivc-est}, the proposed multi-group variant {\tt arLMM-EM} based on \eqref{eq:em-var-est}, the standard {\tt REML} \footnote{Note that \cite{Kang08,Lippert11} are all specific implementation of the REML, but using different parameterizations to improve efficiency.} \cite{Bates15}, $\ell_1$-regularized LMM {\tt lmmlasso} \cite{Schelldorfer11}, and {\tt CovexLasso} using both $\ell_1$- and $\ell_2$-regularization \cite{Jakubik15}. The online implementation of these methods are used.

\paragraph{Synthetic data generation} The simulation is based on synthetic training and validation sets sampled from a fixed LMM distribution. The design matrices as well as the parameters for the fixed LMM are randomly generated. Specifically,
\begin{align*}
X_{ij} &\overset{\text{i.i.d.}}{\sim} \mathcal{N}\left(0,1\right) &
Z_{ij}^{\left(k\right)} &\overset{\text{i.i.d.}}{\sim} \mathcal{U}\left(0,1\right)
&\bm{\gamma}^{\left(k\right)} &\overset{\text{i.i.d.}}{\sim} \mathcal{N}\left(\bm{0}, \bm{K}^\top\bm{K}\right)\\
K_{ij} &\overset{\text{i.i.d.}}{\sim} \mathcal{N}\left(0,1\right)
&\bm{\beta} &\overset{\phantom{\text{i.i.d.}}}{\sim} \mathcal{N}\left(\bm{0},\bm{I}\right) &
\sigma^2 &\overset{\phantom{\text{i.i.d.}}}{\sim} \mathcal{U}\left(0,d\right).
\end{align*}
Note that there are $d$ random-effect variables with covariance $\bm{K}^\top\bm{K}$. Thus, the random-effect design matrix $\bm{Z}\in\mathbb{R}^{n\times q}$, $q=m d$, will be block-diagonal with diagonal blocks $\bm{Z}^{\left(k\right)}$. Given the number of observations $n$, we randomly sample $n_k$ observations for each group $k$, where the fractions $n_k/n$ are specified by the Dirichlet distribution with the concentration parameters $\left(1,1,\cdots\right)^\top$.

\paragraph{Overdetermined settings}
We first consider the standard setting $n > p$, which are supported by many parameter estimation algorithms of LMMs. We evaluate the performance of {\tt arLMM-AVC} and {\tt arLMM-EM} in a variety of $p$, $d$, and $m$ settings. The parameter estimates obtained using the proposed methods are compared with the estimates given by the standard REML (see e.g., \cite{Laird87,Kang08,Lippert11,Bates15}) which is known to produce unbiased estimates.

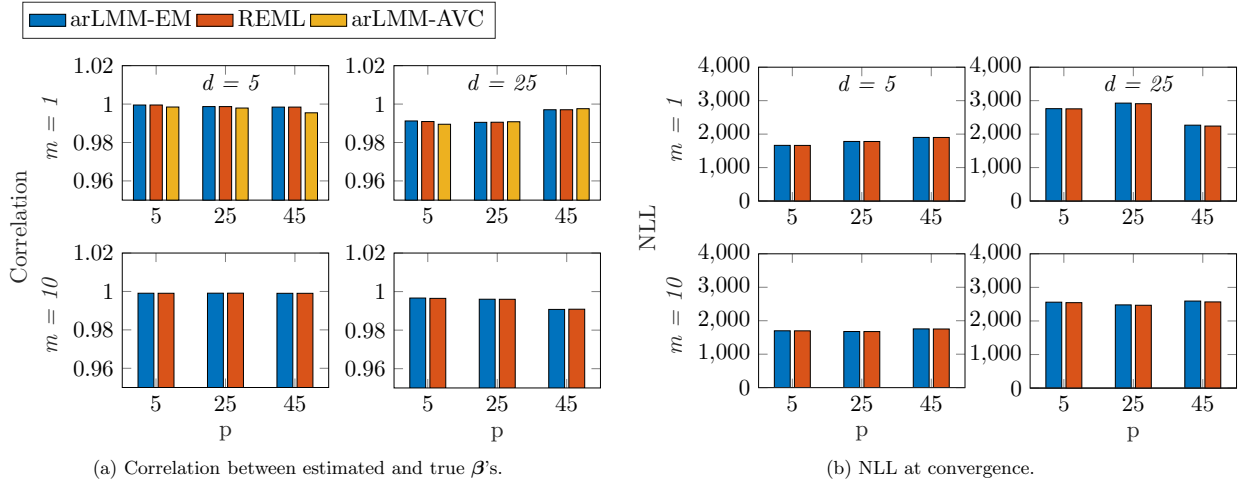
\begin{figure}[t]
\centering
\begin{adjustbox}{width=\columnwidth}
\subfloat[Correlation between estimated and true $\bm{\beta}$'s.]{
\definecolor{mycolor1}{rgb}{0.00000,0.44700,0.74100}%
\definecolor{mycolor2}{rgb}{0.85000,0.32500,0.09800}%
\definecolor{mycolor3}{rgb}{0.92900,0.69400,0.12500}%
\begin{tikzpicture}

\begin{axis}[%
width=3.456in,
height=2.321in,
at={(-0.05in,0.182in)},
scale only axis,
xmin=0,
xmax=1,
ymin=0,
ymax=1,
ylabel style={font=\color{white!15!black}},
ylabel={Correlation},
axis line style={draw=none},
ticks=none,
axis x line*=bottom,
axis y line*=left,
legend style={at={(0.11,0.912)}, anchor=south west, legend columns=3, legend cell align=left, align=left, draw=white!15!black}
]
\end{axis}

\begin{axis}[%
width=1.353in,
height=0.885in,
at={(2.305in,0.285in)},
scale only axis,
bar shift auto,
log origin=infty,
xmin=-4.71428571428571,
xmax=54.7142857142857,
xtick={ 5, 25, 45},
xlabel style={font=\color{white!15!black}},
xlabel={p},
ymin=0.95,
ymax=1.02,
axis background/.style={fill=white},
legend style={at={(0.11,0.912)}, anchor=south west, legend columns=3, legend cell align=left, align=left, draw=white!15!black}
]
\addplot[ybar, bar width=4.571, fill=mycolor1, draw=black, area legend] table[row sep=crcr] {%
5	0.996654259752197\\
25	0.99603636334606\\
45	0.990748309478056\\
};
\addplot[ybar, bar width=4.571, fill=mycolor2, draw=black, area legend] table[row sep=crcr] {%
5	0.996483174816245\\
25	0.996003564832339\\
45	0.990857674093423\\
};
\end{axis}

\begin{axis}[%
width=1.353in,
height=0.882in,
at={(0.525in,0.288in)},
scale only axis,
bar shift auto,
log origin=infty,
xmin=-4.71428571428571,
xmax=54.7142857142857,
xtick={ 5, 25, 45},
xlabel style={font=\color{white!15!black}},
xlabel={p},
ymin=0.95,
ymax=1.02,
ylabel style={font=\color{white!15!black}},
ylabel={${\text{\it{} m = 10}}$},
axis background/.style={fill=white},
legend style={at={(0.11,0.912)}, anchor=south west, legend columns=3, legend cell align=left, align=left, draw=white!15!black}
]
\addplot[ybar, bar width=4.571, fill=mycolor1, draw=black, area legend] table[row sep=crcr] {%
5	0.999079172785464\\
25	0.999101049835936\\
45	0.999036576085262\\
};
\addplot[ybar, bar width=4.571, fill=mycolor2, draw=black, area legend] table[row sep=crcr] {%
5	0.999056972515798\\
25	0.999111936989927\\
45	0.999033676054541\\
};
\end{axis}

\begin{axis}[%
width=1.353in,
height=0.882in,
at={(2.305in,1.517in)},
scale only axis,
bar shift auto,
log origin=infty,
xmin=-4.77777777777778,
xmax=54.7777777777778,
xtick={ 5, 25, 45},
ymin=0.95,
ymax=1.02,
axis background/.style={fill=white},
legend style={at={(-1.8,1.2)}, anchor=south west, legend columns=3, legend cell align=left, align=left, draw=white!15!black}
]
\addplot[ybar, bar width=3.556, fill=mycolor1, draw=black, area legend] table[row sep=crcr] {%
5	0.991213516150068\\
25	0.990494440587006\\
45	0.997051568851009\\
};
\addlegendentry{arLMM-EM}

\addplot[ybar, bar width=3.556, fill=mycolor2, draw=black, area legend] table[row sep=crcr] {%
5	0.990915333371925\\
25	0.990552026128295\\
45	0.997049679396952\\
};
\addlegendentry{REML}

\addplot[ybar, bar width=3.556, fill=mycolor3, draw=black, area legend] table[row sep=crcr] {%
5	0.9895\\
25	0.9908\\
45	0.9976\\
};
\addlegendentry{arLMM-AVC}

\node[below, align=center]
at (rel axis cs:0.5,1) {${\text{\it{} d = 25}}$};
\end{axis}

\begin{axis}[%
width=1.353in,
height=0.882in,
at={(0.525in,1.517in)},
scale only axis,
bar shift auto,
log origin=infty,
xmin=-4.77777777777778,
xmax=54.7777777777778,
xtick={ 5, 25, 45},
ymin=0.95,
ymax=1.02,
ylabel style={font=\color{white!15!black}},
ylabel={${\text{\it{} m = 1}}$},
axis background/.style={fill=white},
legend style={at={(0.11,0.912)}, anchor=south west, legend columns=3, legend cell align=left, align=left, draw=white!15!black}
]
\addplot[ybar, bar width=3.556, fill=mycolor1, draw=black, area legend] table[row sep=crcr] {%
5	0.999507836830713\\
25	0.998695782062222\\
45	0.998431709166807\\
};
\addplot[ybar, bar width=3.556, fill=mycolor2, draw=black, area legend] table[row sep=crcr] {%
5	0.999519812297908\\
25	0.998694175071189\\
45	0.998435545581158\\
};
\addplot[ybar, bar width=3.556, fill=mycolor3, draw=black, area legend] table[row sep=crcr] {%
5	0.998468015136546\\
25	0.997939870054938\\
45	0.995456300290602\\
};
\node[below, align=center]
at (rel axis cs:0.5,1) {${\text{\it{} d = 5}}$};
\end{axis}
\end{tikzpicture}
\subfloat[NLL at convergence.]{
\definecolor{mycolor1}{rgb}{0.00000,0.44700,0.74100}%
\definecolor{mycolor2}{rgb}{0.85000,0.32500,0.09800}%
\begin{tikzpicture}

\begin{axis}[%
width=3.456in,
height=2.312in,
at={(-0.1in,0.181in)},
scale only axis,
xmin=0,
xmax=1,
ymin=0,
ymax=1,
ylabel style={font=\color{white!15!black}},
ylabel={NLL},
axis line style={draw=none},
ticks=none,
axis x line*=bottom,
axis y line*=left
]
\end{axis}

\begin{axis}[%
width=1.353in,
height=0.881in,
at={(2.305in,0.284in)},
scale only axis,
bar shift auto,
xmin=-4.71428571428571,
xmax=54.7142857142857,
xtick={ 5, 25, 45},
xlabel style={font=\color{white!15!black}},
xlabel={p},
ymin=0,
ymax=4000,
axis background/.style={fill=white}
]
\addplot[ybar, bar width=4.571, fill=mycolor1, draw=black, area legend] table[row sep=crcr] {%
5	2559.12524931025\\
25	2478.44389674538\\
45	2592.52077191467\\
};
\addplot[forget plot, color=white!15!black] table[row sep=crcr] {%
-4.71428571428571	0\\
54.7142857142857	0\\
};
\addplot[ybar, bar width=4.571, fill=mycolor2, draw=black, area legend] table[row sep=crcr] {%
5	2544.82200263335\\
25	2467.44411331326\\
45	2566.92946543283\\
};
\addplot[forget plot, color=white!15!black] table[row sep=crcr] {%
-4.71428571428571	0\\
54.7142857142857	0\\
};
\end{axis}

\begin{axis}[%
width=1.353in,
height=0.878in,
at={(0.525in,0.287in)},
scale only axis,
bar shift auto,
xmin=-4.71428571428571,
xmax=54.7142857142857,
xtick={ 5, 25, 45},
xlabel style={font=\color{white!15!black}},
xlabel={p},
ymin=0,
ymax=4000,
ylabel style={font=\color{white!15!black}},
ylabel={${\text{\it{} m = 10}}$},
axis background/.style={fill=white}
]
\addplot[ybar, bar width=4.571, fill=mycolor1, draw=black, area legend] table[row sep=crcr] {%
5	1699.34412548719\\
25	1677.55345021798\\
45	1755.87992515149\\
};
\addplot[forget plot, color=white!15!black] table[row sep=crcr] {%
-4.71428571428571	0\\
54.7142857142857	0\\
};
\addplot[ybar, bar width=4.571, fill=mycolor2, draw=black, area legend] table[row sep=crcr] {%
5	1697.51287781496\\
25	1676.07541046209\\
45	1753.01235273251\\
};
\addplot[forget plot, color=white!15!black] table[row sep=crcr] {%
-4.71428571428571	0\\
54.7142857142857	0\\
};
\end{axis}

\begin{axis}[%
width=1.353in,
height=0.878in,
at={(2.305in,1.511in)},
scale only axis,
bar shift auto,
xmin=-4.71428571428571,
xmax=54.7142857142857,
xtick={ 5, 25, 45},
ymin=0,
ymax=4000,
axis background/.style={fill=white}
]
\addplot[ybar, bar width=4.571, fill=mycolor1, draw=black, area legend] table[row sep=crcr] {%
5	2760.74898189307\\
25	2926.90889357527\\
45	2267.42676610884\\
};
\addplot[forget plot, color=white!15!black] table[row sep=crcr] {%
-4.71428571428571	0\\
54.7142857142857	0\\
};
\addplot[ybar, bar width=4.571, fill=mycolor2, draw=black, area legend] table[row sep=crcr] {%
5	2755.47627749073\\
25	2908.5404453831\\
45	2241.00818017105\\
};
\addplot[forget plot, color=white!15!black] table[row sep=crcr] {%
-4.71428571428571	0\\
54.7142857142857	0\\
};
\node[below, align=center]
at (rel axis cs:0.5,1) {${\text{\it{} d = 25}}$};
\end{axis}

\begin{axis}[%
width=1.353in,
height=0.878in,
at={(0.525in,1.511in)},
scale only axis,
bar shift auto,
xmin=-4.71428571428571,
xmax=54.7142857142857,
xtick={ 5, 25, 45},
ymin=0,
ymax=4000,
ylabel style={font=\color{white!15!black}},
ylabel={${\text{\it{} m = 1}}$},
axis background/.style={fill=white}
]
\addplot[ybar, bar width=4.571, fill=mycolor1, draw=black, area legend] table[row sep=crcr] {%
5	1663.8592930309\\
25	1780.8468788423\\
45	1901.34130054963\\
};
\addplot[forget plot, color=white!15!black] table[row sep=crcr] {%
-4.71428571428571	0\\
54.7142857142857	0\\
};
\addplot[ybar, bar width=4.571, fill=mycolor2, draw=black, area legend] table[row sep=crcr] {%
5	1662.39943541614\\
25	1779.79139808758\\
45	1900.10959375033\\
};
\addplot[forget plot, color=white!15!black] table[row sep=crcr] {%
-4.71428571428571	0\\
54.7142857142857	0\\
};
\node[below, align=center]
at (rel axis cs:0.5,1) {${\text{\it{} d = 5}}$};
\end{axis}
\end{tikzpicture}
\end{adjustbox}
\caption{Comparing the performance of parameter estimation on synthetic data with $n=1,000$ observations. Note that {\tt arLMM-AVC} is only applicable to the single group setting. This figure shows that the {\tt arLMM-EM} and {\tt arLMM-AVC} achieve comparable estimation performance as {\tt REML}.}
\label{fig:low}
\end{figure}

\cref{fig:low} shows the error for the fitted parameters using 1,000 observations sampled from the underlying LMM. The average results are reported over 10 runs on independently generated datasets. These generated datasets have the same number of observations $n=1,000$ but different settings of $p$, $d$, and $m$.

As shown in \cref{fig:low}, {\tt arLMM-EM} and {\tt arLMM-AVC} exhibit comparable estimation accuracy as the standard {\tt REML}. Note that {\tt arLMM-AVC} is applicable only when $m=1$ (the first row of \cref{fig:low}). Since {\tt arLMM-AVC} is based on non-iterative approximation to the variance components, the error is slightly higher than the others as expected.

\paragraph{High-dimensional (underdetermined) setting}
We also examined the performance of our model in the high-dimensional setting where we are interested in variable selection based on the fixed-effects coefficients. In \cref{tbl:exp-confs}, we specify the three regimes for which we generate simulated data:
an overdetermined LMM, a moderate-dimensional LMM, and a high-dimensional LMM. Each regime is characterized by $n$, $p$, $d$, and $m$, and an extra parameter $s$, the number of non-zeros in the ground-truth $\bm{\beta}_{\text{True}}$. Since $m>1$ we did not apply {\tt arLMM-AVC}.

\begin{table}[htb]
\caption{Regimes of data.}
\vskip -0.15in
\label{tbl:exp-confs}
\begin{center}
\begin{small}
\begin{sc}
\begin{tabular}{c|c}
& $\left(n,p,d,m,s\right)$\\ \hline
Low & $\left(100,1000,5,3,10\right)$ \\ \hline
Mod & $\left(200,10^4,5,3,10\right)$ \\ \hline
High & $\left(10^4,10^6,10,100,100\right)$
\end{tabular}
\end{sc}
\end{small}
\end{center}
\vskip -0.1in
\end{table}

\begin{figure*}[t]
\centering
\begin{adjustbox}{width=\textwidth}
\input{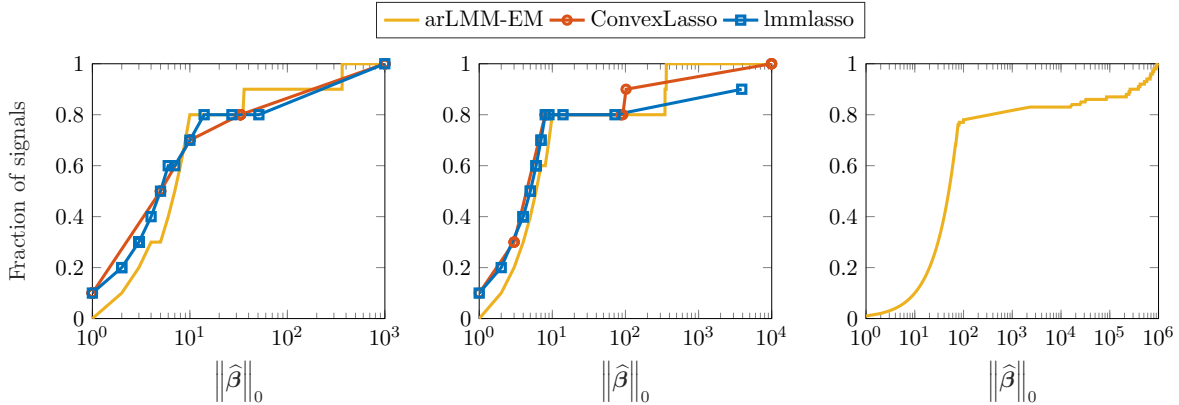}
\end{adjustbox}
\vskip -0.1in
\caption{Fraction of signals captured by $\widehat{\bm{\beta}}$. From left to right, the configurations are respectively {\sc Low}, {\sc Mod}, and {\sc High} in \cref{tbl:exp-confs}. It shows that {\tt arLMM-EM} performs competitively in variable selection.}
\label{fig:high}
\end{figure*} 

\cref{fig:high} reports variable selection results for {\tt arLMM-EM}, {\tt lmmlasso} \cite{Schelldorfer11}, and {\tt ConvexLasso}. All the settings in \cref{tbl:exp-confs} have sparse ground-truth $\bm{\beta}_{\text{True}}$. \cref{fig:high} shows the fraction of the signal (non-zeros in $\bm{\beta}_{\text{True}}$) recovered in the estimate $\widehat{\bm{\beta}}$. We varied the regularization parameters to obtain $\widehat{\bm{\beta}}$ with different levels of sparsity $\left\|\widehat{\bm{\beta}}\right\|_0$. The entries with the largest magnitude of $\widehat{\bm{\beta}}$ is considered the signal in these evaluations. As can be seen, {\tt arLMM-EM} delivers a competitive signal recovery ratio for $p=10^3,10^4$, and scales to considerably larger dimensions $n=10^4$ and $p=10^6$, which the other two methods cannot handle. This is also shown by the runtime (\cref{tbl:high-runtime}) on a Linux workstation with 2.40GHz Intel Xeon E5-2695 CPU and 256Gb memory. Once the variables are selected, a second run of the algorithm using only the selected variables gives the desired fixed-effect coefficients.

\begin{table}[!ht]
\caption{Runtime for the high-dimensional experiments.}
\vskip -0.15in
\label{tbl:high-runtime}
\begin{center}
\begin{small}
\begin{sc}
\begin{tabular}{c|c|c|c}
 & Low & Mod & High \\ \hline
lmmlasso & 75 s & 2.5 hr & / \\ \hline
ConvexLasso & 1 min & 40 min & / \\ \hline
arLMM-EM & $< 1$ s& $< 1$ s & 1 hr
\end{tabular}
\end{sc}
\end{small}
\end{center}
\vskip -0.1in
\end{table}

\subsection{Genome wide association studies}
LMMs have been used extensively for mapping traits in statistical genetics. The problem formulation is that of regressing a quantitative or binary trait onto a high-dimensional vector of 450,000 single nucleotide polymorphisms (SNPs), or locations of discrete genetic variation, for each subject in the study. The random effects are driven by population structure or
the pairwise similarity or relatedness between individuals.

\begin{table*}[htb!]
\caption{Comparing the prediction performance as well as the runtime of {\tt BSLMM} and {\tt arLMM-AVC} on the WTCCC dataset. $\text{Corr}\left(\widehat{\bm{\beta}}_{\text{BSLMM}},\widehat{\bm{\beta}}_{\text{arLMM-AVC}}\right)$ denotes the correlation between the fixed-effect coefficient estimates given by {\tt BSLMM} and {\tt arLMM-AVC}. }
\vskip -0.15in
\label{tbl:wtccc}
\begin{center}
\begin{small}
\begin{sc}
\begin{tabular}{cccccc}
\hline
\multirow{2}{*}{Disease} & 
\multicolumn{2}{c}{Time (min)} & \multicolumn{2}{c}{AUC} & \multirow{2}{*}{$\text{Corr}\left(\widehat{\bm{\beta}}_{\text{BSLMM}},\widehat{\bm{\beta}}_{\text{arLMM-AVC}}\right)$}\\
\cline{2-3} \cline{4-5}
 & BSLMM & arLMM-AVC & BSLMM & arLMM-AVC\\
\hline
BD   & 115.8 & 25.1 & 0.6520 & 0.6461 & 0.9898\\
CAD  & 161.0 & 26.1 & 0.5899 & 0.5937 & 0.9776\\
CD   & 110.3 & 25.4 & 0.6260 & 0.6328 & 0.9862\\
HT   & 120.6 & 19.4 & 0.5956 & 0.6010 & 0.9766\\
RA   & 147.4 & 19.9 & 0.6173 & 0.6206 & 0.9834\\
T1D  & 120.0 & 20.4 & 0.6846 & 0.6840 & 0.9939\\
T2D  & 155.3 & 18.9 & 0.6003 & 0.5993 & 0.9783\\
\hline
\end{tabular}
\end{sc}
\end{small}
\end{center}
\end{table*}

We compare our approximate estimator to the performance of a state-of-the-art estimator called {\tt BSLMM} (Bayesian sparse linear mixed model) \cite{Zhou13}. Specifically, we run BSLMM in its ridge-regression with mixed models setting, the fastest setting of the package for a fair comparison. In this setting  {\tt BSLMM} is computing the maximum a posteriori estimate of the regularized LMM. We compare performance on the Wellcome Trust Case Control Consortium  (WTCCC) dataset of 14,000 cases of 7 diseases - bipolar disorder (BD), coronary artery disease (CAD), Crohn's disease (CD), hypertension (HT), rheumatoid arthritis (RA), type 1 diabetes (T1D), and type 2 diabetes (T2D) - and 3,000 shared controls. This dataset characterizes over 450,000 single nucleotide polymorphisms (SNPs), or locations of discrete genetic variation, for each subject included in the study. Disease status is indicated as a binary response ($1$ for disease case, $-1$ for control). Each of the datasets had roughly equal numbers of cases and controls. 

For this experiment, we adopted the same random-effect covariance parameterization used to control for population structure $\theta \bm{X}\bm{X}^\top/p$ as {\tt BSLMM}, and used {\tt arLMM-AVC} with AVCs \eqref{eq:k-est} and \eqref{eq:sigma-est}. {\tt arLMM-AVC} and {\tt {\tt BSLMM}} were run under identical conditions on each of the seven approximately
5,000-subject $\times$ 450,000-SNP datasets. This was the same experimental setup used to validate {\tt BSLMM} in \cite{Zhou13}.

Observed runtimes for each of the seven datasets are reported in \cref{tbl:wtccc}. Correlation between the $\widehat{\bm{\beta}}$ reported by {\tt arLMM-AVC} and {\tt BSLMM} in all cases was very high, 0.977 or greater.

We also compared disease status prediction by splitting each dataset into a training set comprised of 80\% of subjects and a test set of the remaining 20\%, selected at random. {\tt arLMM-AVC } and {\tt BSLMM} each estimated $\widehat{\bm{\beta}}$ from the training set and attempted to predict disease status on the held-out set. We repeated this 20 times for each of the seven datasets and evaluated performance of prediction on the held-out set by area under the ROC curve (AUC). These results are also given in \cref{tbl:wtccc}. Predictive performance by {\tt arLMM-AVC} and {\tt BSLMM} was almost identical. Predicting disease status from genetic markers is hard and it is well known
that the effect sizes of genetic variants are individually small and that a great deal of variance in the response will also be driven by environmental factors.

\section{Conclusions}
\label{sec:concl}
State-of-the-art parameter inference in LMMs requires computational complexity which depends at least linearly on the number of covariates $p$ and generally relies on heuristics. In this paper, we presented scalable learning algorithms which have sublinear computational complexity in $p$ and provide theoretical guarantees for the accuracy of parameter estimation. Our approach combines novel approximate estimators that use a kernel matrix of the observations and the subsampled randomized Hadamard transform. Experiments on synthetic and real data corroborate the theory.

\section*{Code and Data}
The code of the {\tt arLMM} implementation as well as synthetic data generation is available on the Git repository: \url{https://github.com/ZilongTan/arLMM}. This study also makes use of data generated by the WTCCC. The WTCCC data can be obtained from \url{https://www.wtccc.org.uk}, where a full list of the investigators who contributed to the generation of the data are also listed. Funding for the WTCCC project was provided by the Wellcome Trust.

\section*{Acknowledgements}
Z.T. is supported by grant NSF CNS-1423128. S.M. would like to acknowledge the support of grants NSF IIS-1546331, NSF DMS-1418261, NSF IIS-1320357, NSF DMS-1045153, and NSF DMS-1613261. X.Z. would like to acknowledge the support of grants NIH R01HG009124 and NSF DMS-1712933.

\bibliography{ref}
\bibliographystyle{alpha}

\end{document}